\newcommand{\algo}{\texttt{\textcolor{pblue}{SORL}}\xspace}
\newcommand{\algofull}{Scalable Offline Reinforcement Learning\xspace}
\newcommand{\algofullit}{\textit{Scalable Offline Reinforcement Learning}\xspace}
\newcommand{\dbtt}{M^{\text{BTT}}}
\newcommand{\dinf}{M^{\text{inf}}}
\newcommand{\dreg}{M^{\text{disc}}}
\newcommand{\fql}{\texttt{FQL}\xspace}
\definecolor{pblue}{HTML}{4bacc6}
\definecolor{porange}{HTML}{F79646}
\definecolor{ppurple}{HTML}{8064A2}
\title{Scaling Offline RL via\\ Efficient and Expressive Shortcut Models}
\author{%
  Nicolas Espinosa-Dice\\
  Cornell University \\
  \texttt{ne229@cornell.edu} \\
  \And
  Yiyi Zhang \\
  Cornell University \\
  \texttt{yz2364@cornell.edu} \\
  \And
  Yiding Chen \\
  Cornell University \\
  \texttt{yc2773@cornell.edu} \\
  \And
  Bradley Guo \\
  Cornell University \\
  \texttt{bzg4@cornell.edu} \\
  \And
  Owen Oertell \\
  Cornell University \\
  \texttt{ojo2@cornell.edu} \\
  \And
  Gokul Swamy \\
  Carnegie Mellon University \\
  \texttt{gswamy@andrew.cmu.edu} \\
  \And
  Kianté Brantley \\
  Harvard University \\
  \texttt{kdbrantley@harvard.edu} \\
  \And
  Wen Sun \\
  Cornell University \\
  \texttt{ws455@cornell.edu} \\
}
\begin{document}

\maketitle

\begin{abstract}
Diffusion and flow models have emerged as powerful generative approaches capable of modeling diverse and multimodal behavior. However, applying these models to offline reinforcement learning (RL) remains challenging due to the iterative nature of their noise sampling processes, making policy optimization difficult. In this paper, we introduce \algofullit (\algo), a new offline RL algorithm that leverages shortcut models---a novel class of generative models---to scale both training and inference. \algo's policy can capture complex data distributions and can be trained simply and efficiently in a one-stage training procedure. At test time, \algo introduces both sequential and parallel inference scaling by using the learned $Q$-function as a verifier. We demonstrate that \algo achieves strong performance across a range of offline RL tasks and exhibits positive scaling behavior with increased test-time compute. We release the code at 
\textcolor{pblue}{\texttt{\href{https://nico-espinosadice.github.io/projects/sorl/}{\texttt{nico-espinosadice.github.io/projects/sorl}}}}.
\end{abstract}

\section{Introduction}
Offline reinforcement learning (RL) \citep{ernst2005tree, lange2012batch,levine2020offline} is a paradigm for using fixed datasets of interactions to train agents without online exploration. In this paper, we tackle the challenge of scaling offline RL, for which there are two core components: \textit{training} and \textit{inference}. 

In order to scale training, offline RL algorithms must be capable of handling larger, more diverse multi-modal datasets 
\citep{o2024open,rafailov2024d5rl,park2024ogbench,hussing2023robotic,gurtler2023benchmarking,bu2025agibot}. 
While standard Gaussian-based policy classes cannot model multi-modal data distributions, 
flow matching \citep{lipman2022flow,liu2022flow,albergo2023stochastic} 
and diffusion models \citep{sohl2015deep,ho2020denoising,song2021scorebased}
have emerged as powerful, highly expressive model classes capable of modeling complex data distributions. However, while generative models like diffusion and flow matching can model diverse offline data, applying them to offline RL is challenging due to their iterative noise sampling process, which makes policy optimization difficult, often requiring backpropagation through time or distillation of a larger model. Ultimately, we desire a training procedure that can efficiently train an expressive model class.

The second core component in scaling offline RL is inference. During inference, we desire both efficiency and precision: the agent must be able to make decisions rapidly (e.g. autonomous vehicles) but also take precise actions (e.g. surgical robots). Inspired by the recent work in test-time scaling of large language models
\citep{wei2023chainofthoughtpromptingelicitsreasoning,deepseekai2025deepseekr1incentivizingreasoningcapability,gui2024bonbon,brown2024largelanguagemonkeysscaling,muennighoff2025s1simpletesttimescaling,madaan2023selfrefineiterativerefinementselffeedback,qu2024recursiveintrospectionteachinglanguage,qin2025backtrack}, we investigate how test-time scaling can be applied in offline RL with generative models. That is, we desire an approach that can perform inference efficiently---avoiding the slow, many-step generation process of diffusion models---but can also leverage additional test-time compute when available. 

Unfortunately, recent work in offline RL fails to achieve both of the desiderata necessary to scale offline RL. Distillation-based approaches to offline RL with generative models \citep{ding2023consistency, chen2023score, chen2024diffusion, park2025flow} avoid extensive backpropagation through time during training. However, they may require more complex, two-stage training procedures (e.g. propagating through teacher/student networks), over which error compounds. They also have limited scaling of inference steps, thus losing expressivity when compared to multi-step generative models \citep{frans2024one}. Alternatively, diffusion-based approaches that leverage backpropagation through time \citep{wang2022diffusion,he2023diffcps,zhang2024entropy,ada2024diffusion} may learn policies with greater expressivity. However, diffusion models have slow inference, requiring a larger number of steps to generate high quality outputs \citep{frans2024one}.

At a high-level, offline RL algorithms that employ generative models struggle with the following tradeoff. In order to achieve training efficiency, we want to avoid performing many steps of the iterative noise sampling process during policy optimization, which often requires backpropagating many steps through time. However, modeling complex distributions in the offline data, such as diverse or multi-modal data, may require a larger number of discretization steps to allow for maximum expressivity. Finally, at inference-time, we desire the ability to both generate actions efficiently via a few-step sampling procedure (e.g. in robot locomotion settings) \textit{and} leverage additional test-time compute when available (e.g. in robot manipulation settings).

In this paper, we tackle the question of how to achieve efficient training, while maintaining expressivity, \textit{and} scale with greater inference-time compute. We introduce \algofullit (\algo): a simple, efficient one-stage training procedure for expressive policies. Our key insight is to leverage \textit{shortcut models} \citep{frans2024one}, a novel class of generative models, in order to incorporate \textit{self-consistency} into the training process, thus allowing the policy to generate high quality samples under \textit{any} inference budget. \algo's self consistency property allows us to vary the number of denoising steps used for the three core components in offline RL: policy optimization (i.e. the number of backpropagation through time steps), regularization to offline data (i.e. the total number of discretization steps), and inference (i.e. the number of inference steps). We can use fewer steps for policy optimization, thus making training efficient, while performing inference under varying inference budgets, depending on the desired inference-time compute budget. We incorporate shortcut models into regularized actor-critic algorithm and make the following contributions:
\begin{enumerate}
    \item We introduce \textbf{\algo, an efficient, one-stage training procedure for expressive policies that can perform inference under \textit{any} compute budget}, including one-step inference. Despite its efficiency, \algo's policy is still able to capture complex, multi-modal data distributions.
    \item Through a novel theoretical analysis of shortcut models, \textbf{we prove that \algo's training objective regularizes \algo's policy to the behavior of the offline data}. 
    \item Empirically, \textbf{\algo achieves the best performance against 10 baselines on a range of diverse tasks in offline RL}. 
    \item At test-time, \textbf{\algo can scale with greater inference-time compute by increasing the number of inference steps (i.e. sequential scaling) and performing best-of-$N$ sampling (i.e. parallel scaling)}. We show empirically that \algo can make up for a smaller training-time compute budget with a greater inference-time compute budget. \algo can also generalize to more inference steps at test time than the number of steps optimized during training.
\end{enumerate}

\section{Background}
\label{sec:background}
\paragraph{Markov Decision Process.}
We consider an infinite-horizon Markov Decision Process (MDP) \citep{puterman2014markov},
$\mathcal{M} = \langle \Xcal, \Acal, P, R, \gamma, \mu \rangle$. 
$\Xcal$ and $\Acal$ are the state space and action space, respectively.
$P: \Xcal \times \Acal \rightarrow \Delta (\Xcal)$ is the transition function,
$R: \Xcal \times \Acal \rightarrow [0, 1]$ is the reward function,
and $\gamma$ is the discount factor.
$\mu \in \Delta (\Xcal)$ is the starting state distribution.
Let $\Pi=\{\pi: \Xcal \rightarrow \Delta(\Acal)\}$ be the class of stationary policies.
We define the state-action value function $Q(x, a): \Xcal \times \Acal \rightarrow \RR$  as
$Q^\pi(x, a) := \EE_\pi \left [ \sum_{i=1}^{\infty} R(x_i, a_i) \mid x_i = x, a_i = a \right ]$.
We define the state visitation distribution generated by a policy $\pi$ to be
$d^\pi_\mu := (1-\gamma) \EE_{x_0 \sim \mu} \left [ \sum_{i=0}^{\infty}\gamma^i \Pr_i^{\pi}(x \mid x_0) \right ]$.
In the offline RL setting, we assume access to a fixed dataset $\Dcal = \{(x, a, r, x')^{(j)}\}_{j \in 1, \ldots, n}$, such that the data was generated $(x,a,x') \sim d^{\pi_B}$ by some behavior policy $\pi_B$.

\paragraph{Flow Matching.} We define flow matching \citep{lipman2022flow,liu2022flow} as follows. 
Let $p^{\star} \in \Delta(\RR^d)$ be the target data distribution in $d$-dimensional Euclidean space. Let $\bar z_0 \sim \Ncal(0,I)$ and $\bar z_1\sim p^{\star}$ be two independent random variables. We define $\bar z_t$ to be the linear-interpolation between the $\bar z_0$ and $\bar z_1$:
\begin{align}
    \bar z_t := t\bar z_1 + (1-t) \bar z_0, \quad 0\le t\le 1.
\end{align}
$\{\bar z_t\}_{t\in[0,1]}$ is fully determined by the start point $\bar z_0$ and end point $\bar z_1$. We use $\{p_t\}_{t\in[0,1]}$ to denote the sequence of marginal distributions of $\bar z_t$'s, where $p_0 = \Ncal(0,I)$ and $p_1 = p^{\star}$. 

The \emph{drift function} $v_t(\cdot): \RR^d \rightarrow \RR^d$ is defined to be the solution to the following least square regression:
\begin{align}
\label{eq:flow-def-opt}
    \min_f \int_0^1 \EE_{\bar z_0\sim \Ncal(0,I), \bar z_1 \sim p^{\star}}\left[\left\|\bar z_1 - \bar z_0 - f(t\bar z_1 + (1-t)\bar z_0, t)\right\|_2^2\right] \d t.
\end{align}
We use $f^{\star}$ to denote the solution to the optimization problem in Equation~\ref{eq:flow-def-opt} and define:
\begin{align*}
    v_t(z) := f^{\star}(z,t),\quad \forall z\in\RR^d, t\in[0,1].
\end{align*}
The drift function $v_t(\cdot)$ induces an ordinary differential equation (ODE)
\begin{align}
    \label{eq:flow-ode}
    \frac{\d}{\d t}z_t = v_t (z_t).
\end{align}
An appealing property of Equation~\ref{eq:flow-ode} is: if the initial $z_0$ is drawn from Gaussian distribution $\Ncal(0,I)$, then the marginal distribution of $z_t$, the solution to the ODE, is exactly $p_t$, the marginal distribution of the linear-interpolation process \citep{liu2022flow}. In practice, one can learn a drift function $\hat v_t(\cdot)$ by directly optimizing Equation~\ref{eq:flow-def-opt}, starting from $z_0 \sim \Ncal(0,I)$, and solve the ODE: $\d z_t = \hat v_t (z_t) \d t$. The solution $z_1$ is an approximate sample from the target distribution $p^{\star}$.

\paragraph{Shortcut Models.}
Flow matching typically rely on small, incremental steps, which can be computationally expensive at inference time. Shortcut models improve inference efficiency by learning to take larger steps along the flow trajectory \citep{frans2024one}. The key insight of \citet{frans2024one} is to condition the model not only on the timestep $t$, as in standard flow matching, but also on a step size $h$. 

The shortcut model $s(z_t, t, h)$ predicts the normalized direction from $z_t$ towards the correct next point $z_{t+h}$, such that
\begin{equation}
    z_t + s(z_t, t, h)h \approx z_{t+h}.
\end{equation}
The objective is to learn a shortcut model $s_\theta(z_t, t, h)$ for all combinations of $z_t, t, h$. The loss function is given by
\begin{equation}
    \Lcal^S(\theta) = 
    \EE_{z_0 \sim \Ncal(0,I), \atop z_1 \sim D, (t, h) \sim p(t, h)}
    \Big [
    \big \Vert 
    \underbrace{s_\theta(z_t, t, 1/M) - (z_1 - z_0)}
    _{\texttt{Flow Matching}}
    \big \Vert^2
    + 
    \big \Vert 
    \underbrace{s_\theta(z_t, t, 2h) - s_{\text{target}}}
    _{\texttt{Self-Consistency}}
    \big \Vert^2
    \Big ],
\end{equation}
where $s_{\text{target}} = s_\theta(z_t, t, h)/2 + s_\theta(z'_{t+h}, t, h)/2$ and $z'_{t+h} = z_t + s_\theta(z_t, t, h)h$. $h$ is sampled uniformly from the set of powers of 2 between 1 and $M$, the maximum number of discretization steps, and $t$ is sampled uniformly between 0 and 1, so $p(h, t) = \text{Unif}\big (\{2^k\}_{k=0}^{\lfloor \log_2 M \rfloor} \big ) \times \text{Unif}(0, 1)$.\footnote{Note that the interval [0, 1] is discretized into $M$ steps.}

The first component of the loss is standard flow matching which ensures that when $h=1/M$, the smallest step size, the model recovers the target direction $z_1 - z_0$.  The second component, self-consistency, encourages the model to produce consistent behavior across step sizes. Intuitively, it ensures that one large \textit{jump} of size $2h$ is equivalent to two combined \textit{jumps} of size $h$. Thus, the model learns to take larger jumps with more efficient, fewer-step inference procedures \citep{frans2024one}.

\section{Scaling Offline Reinforcement Learning}
At a high-level, offline RL aims to optimize a policy subject to some regularization constraint, which we can formulate as
\begin{equation}
    \label{eq:policy-opt-w-reg}
    \argmax_{\pi \in \Pi} 
    \underbrace{J_\mathcal{D}(\pi)}_{\texttt{Policy Optimization\vphantom{Regularization}}}
    -
    \underbrace{\alpha R(\pi, \pi_B)}_{\texttt{Regularization\vphantom{Policy Optimization}}}
\end{equation}
where $J_\Dcal(\pi)$ is the expected return over offline dataset $\Dcal$, $\pi_B$ is the offline data generating policy, and $R(\pi, \pi_B)$ is a regularization term (e.g. a divergence measure between $\pi$ and $\pi_B$). 

The regularization constraint can be implemented through a behavioral cloning (BC)-style loss \citep{wu2019behavior}, such as in the behavior-regularized actor-critic formulation \citep{wu2019behavior,fujimoto2021minimalist,tarasov2023revisiting,park2025flow} 
\begin{equation}
    \label{eq:policy-opt-w-kl-reg}
    \argmax_{\pi \in \Pi} 
    \EE_{x, a \sim \Dcal} \Big [ \EE_{a^\pi \sim \pi_\theta(\cdot \mid x)} \Big [ 
    \underbrace{Q_{\phi}(x, a^\pi)}
    _{\mathclap{\texttt{Q Loss}}}
    - 
    \underbrace{\alpha \log \pi_{\theta} (a \mid x)}
    _{\mathclap{\texttt{BC Loss}}}
    \Big ] \Big ]
\end{equation}
where the $Q_\phi$ function is trained via minimizing the Bellman error. In order to incorporate generative models, the BC loss can be replaced with score matching or flow matching \citep{wang2022diffusion, park2024ogbench}. However, the core challenge of performing offline RL with generative models is the policy optimization component, due to the iterative nature of the noise sampling process. 

\begin{algorithm}[t]
\caption{\algofull (\algo)}
\label{alg:training}
\KwData{Offline dataset $\mathcal{D}$}
\DontPrintSemicolon
\SetArgSty{textnormal}
\SetKwComment{tcp}{\# }{}      %
\SetKwComment{sectioncomment}{}{}  %
\While{not converged}{
    Sample $(x, a^1, x', r) \sim \mathcal{D},\quad a^0 \sim \mathcal{N}(0, I),\quad (h,t) \sim p(h,t)$ \tcp*[r]{Parallelize batch}
    $a^t \gets (1 - t)a^0 + ta^1$ \tcp*[r]{Noise action}
    \,
    
    \begin{tcolorbox}[bluebox,width=0.93\linewidth]
    \sectioncomment{$\triangleright$ \color{pblue} Q Update}
    \For{all batch elements}{
        $m \sim \text{Unif} \{1, \ldots, \dbtt\}$ \tcp*[r]{Choose number of inference steps}
        $a^{\pi} \sim \pi_\theta(\cdot \mid x, m)$ \tcp*[r]{Sample action}
    }

    \,
    
    \sectioncomment{$\triangleright$ \color{pblue} Self-Consistency}
    \For{all batch elements}{
        $s_t \gets s_{\theta}(a^t, t, h \mid x)$ \tcp*[r]{First small step}
        $a^{t+h} \gets a^t + s_t h$ \tcp*[r]{Follow ODE}
        $s_{t+h} \gets s_{\theta}(a^{t+h}, t + h, h \mid x)$ \tcp*[r]{Second small step}
        $s_{\text{target}} \gets \text{stopgrad}(s_t + s_{t+h} \mid x)/2$ \tcp*[r]{Self-consistency target}
    }

    \,
    
    \sectioncomment{$\triangleright$ \color{pblue} Flow Matching}
    \For{all batch elements}{
        $h \gets 1/\dreg$ \tcp*[r]{Use smallest step size}
        $s_{\text{target}} \gets a^1 - a^0$ \tcp*[r]{Flow-matching target}
    }

    \,

    $\theta \gets \nabla_{\theta} \Vert s_{\theta}(a^t, t, 2h \mid x) - s_{\text{target}}\Vert^2 + \nabla_{\theta} Q_\phi(x, a^{\pi})$ \tcp*[r]{Update actor}

    \end{tcolorbox}

    \sectioncomment{$\triangleright$ \color{pblue} Critic Update}
    \For{all batch elements}{
        $m \sim \text{Unif} \{1, \ldots, \dbtt \}$ \tcp*[r]{Choose number of inference steps}
        $a^{\pi}_{x'} \sim \pi_\theta(\cdot \mid x', m)$ \tcp*[r]{Sample action}
    }

    \,
    
    $\phi \gets \nabla_{\phi} \left( Q_\phi (x, a^1) - r - \gamma Q_{\phi}(x', a^{\pi}_{x'}) \right)^2$ \tcp*[r]{Update critic}
}
\end{algorithm}

\subsection{\algofull (\algo)}
\paragraph{Motivation.}
In order to scale offline RL, we desire an efficiently trained expressive policy that is scalable under any inference budget.
Our key insight is that, by incorporating self-consistency into training, \algo can vary the number of denoising steps used during policy optimization (i.e. the number of backpropagation through time steps), regularization to offline data (i.e. the total number of discretization steps), and inference (i.e. the number of inference steps)---thus enabling both efficient training of highly expressive policy classes \textit{and} inference-time scaling. Unlike two-stage methods, which take existing diffusion models and later distill one-step capabilities into them, \algo is a unified model in which varying-step inference is learned by a single network in one training run.\footnote{Note that directly regularizing to the empirical offline data, as done via flow matching in \algo, is preferable to regularizing with respect to a learned behavior cloning (BC) policy, as done by \fql, when the underlying distribution class is unknown. This is because, in the nonparametric setting, the empirical distribution is a statistically consistent estimator, and it is minimax rate-optimal under common metrics like total variation and 2-Wasserstein distance.}

\paragraph{Policy Class and Inference Procedure.} We model our policy by the shortcut function $s_\theta$, and sample actions via the Euler method with the shortcut function $s_\theta$. The full procedure is presented in Algorithm \ref{alg:sampling}. Slightly overloading notation, we condition on the number of inference steps $m$ when generating actions from the policy (i.e. $a \sim \pi_\theta(\cdot \mid x, m)$). At test-time, we sample actions using $\dinf$ inference steps. Note that, since the inference process corresponds to solving a deterministic ODE, which is approximated using the Euler method, we can perform backpropagation through time on actions sampled via Algorithm \ref{alg:sampling} during training.

\paragraph{Actor Loss.} We present \algo's full training procedure in Algorithm \ref{alg:training}. There are three components to \algo's training: the Q update, the regularization to offline data, and the self-consistency:
\begin{equation}
    \label{eq:training-loss}
    \Lcal_\pi(\theta)
    = 
    \underbrace{\Lcal_{\text{QL}}(\theta)}
    _{\mathclap{
    \texttt{\color{black}Q Loss\vphantom{Flow Matching LossSelf-Consistency Loss}}}}
    \hspace{1em}
    \hspace{1em}
    + 
    \hspace{1em}
    \hspace{1em}
    \hspace{1em}
    \underbrace{\Lcal_{\text{FM}}(\theta)}
    _{\mathclap{\color{black}\texttt{Flow Matching Loss\vphantom{Q LossSelf-Consistency Loss}}}}
    \hspace{1em}
    \hspace{1em}
    \hspace{1em}
    +
    \hspace{1em}
    \hspace{1em}
    \underbrace{\Lcal_{\text{SC}}(\theta)}
    _{\mathclap{\color{black}\texttt{Self-Consistency Loss\vphantom{Flow Matching LossQ Loss}}}}
\end{equation}

\textit{\textcolor{black}{(1) Q Update.}} For the Q update, we first sample actions via the inference procedure in Algorithm \ref{alg:sampling}, using a maximum of $\dbtt$ steps (i.e. backpropagating $\dbtt$ steps through time). The Q loss is computed with respect to this action:
\begin{equation}
    \label{eq:q-loss}
    \Lcal_{\text{QL}}(\theta) = \EE_{x  \sim \Dcal} \EE_{a^\pi \sim \pi_\theta(\cdot \mid x)} \left [ -Q_\phi(x, a^{\pi}) \right ]
\end{equation} 
Since $\dbtt$ typically is small---we experiment with $\dbtt=1, 2, 4, 8$---we can backpropagate the $Q$ loss efficiently, without needing importance weighting or classifier gradients. In other words, even though sampling $a^\pi \sim \pi_{\theta}$ may involve multi-step generations, $\nabla_\theta \Lcal_{\text{QL}}$ is still computable. Additionally, rather than use a fixed number of steps for sampling actions, we sample the number of steps uniformly from the set of powers of 2 between $1$ and $\dbtt$.\footnote{In other words, we sample $m \sim \text{Unif}\{2^k\}_{k=0}^{\lfloor \log_2 \dbtt \rfloor}$.}

\textit{\textcolor{black}{(2) Offline Data Regularization.}} We add a BC-style loss that serves as the regularization to offline data, which we implement via flow matching on the offline data, using $\dreg$ discretization steps. $a^0$ represents a fully noised action (i.e. noise sampled from a Gaussian), and $a^1$ represents a real action (i.e. actions sampled from the offline data $\Dcal$). The flow matching loss is thus:
\begin{equation}
    \Lcal_{\text{FM}}(\theta) = 
    \EE_{x, a^1 \sim \Dcal, a^0 \sim \Ncal, \atop h \sim p(h, t)} 
    \Big [ \Big \Vert 
    \underbrace{s_\theta(a^t, t, 1/\dreg \mid x)}
    _{\mathclap{\texttt{Velocity Prediction}}}
    \hspace{0.5em}
    - 
    \hspace{0.5em}
    \underbrace{(a^1 - a^0)}
    _{\mathclap{\texttt{Velocity Target}}}
    \Big \Vert^2 \Big ]
\end{equation}

\textit{\textcolor{black}{(3) Self-Consistency.}} We add a self-consistency loss to ensure that bigger jumps (e.g. the shortcut of an $m$-step procedure) are consistent with smaller jumps (e.g. the shortcut of a $2m$-step procedure):
\begin{equation}
    \Lcal_\text{SC}(\theta) = 
    \EE_{x, a^1 \sim \Dcal, \atop a^0 \sim \Ncal, (t, h) \sim p(t, h)} 
    \Big [ \Big \Vert 
    \underbrace{s_\theta(a^t, t, 2h \mid x)}
    _{\mathclap{\texttt{1 Double-Step}}}
    \hspace{0.5em}
    - 
    \hspace{0.5em}
    \underbrace{s_{\text{target}}}
    _{\mathclap{\texttt{2 Single-Steps}}}
    \Big \Vert^2 \Big ]
\end{equation}
where 
\begin{equation}
    s_{\text{target}} = 
    \underbrace{s_\theta(a^t, t, h \mid x)/2}
    _{\mathclap{\texttt{1st Single-Step}}}
    + 
    \underbrace{s_\theta(a^{t+h}, t, h \mid x)/2,}
    _{\mathclap{\texttt{2nd Single-Step}}}
\end{equation}
and 
\begin{equation}
    \underbrace{\vphantom{s_\theta(a^t, t, h)} a^{t+h}}_{\texttt{2nd Step's Action}} 
    \!\!=\!\!
    \underbrace{\vphantom{s_\theta(a^t, t, h)} a^t}_{\texttt{1st Step's Action}}
    + 
    \underbrace{\vphantom{s_\theta(a^t, t, h)} h}_{\texttt{Step Size}}
    ~
    \underbrace{s_\theta(a^t, t, h)}_{\texttt{Normalized Direction}}
\end{equation}

\paragraph{Critic Loss.} 
We train the critic via a standard Bellman error minimization, such that
\begin{equation}
\label{eq:critic-loss}
   \Lcal_{Q}(\phi) = \left ( Q_\phi (x, a^1) - r - \gamma Q_{\phi}^{\text{target}}(x', a^{\pi}_{x'}) \right)^2
\end{equation} 
where $a^{\pi}_{x'} \sim \pi_{\theta}(\cdot \mid x')$ and $Q_{\phi}^{\text{target}}$ is the target network \citep{mnih2013playing, park2025flow}.

\begin{algorithm}[t]
\DontPrintSemicolon  %
\SetKwComment{tcp}{}{}  %
\SetAlgoNlRelativeSize{-1}  %
\caption{\algo Action Sampling via Forward Euler Method}
\label{alg:sampling}

\KwIn{State $x$, number of inference steps $m$}
\KwOut{Action $a$}

$a \sim \mathcal{N}(0, I)$\;
$h \gets 1/m$\;
$t \gets 0$\;

\For{$n \in \{0, \dots, m - 1\}$}{
  $a \gets a + h \cdot s_{\theta}(a, t, h \mid x)$\;
  $t \gets t + h$\;
}

\KwRet $a$
\end{algorithm}

\subsection{\algo Inference-Time Scaling}
\paragraph{\textcolor{pblue}{The Benefit of Self-Consistency.}}
The loss function $\Lcal_{\pi}(\theta)$ includes both the \emph{flow matching error}
and the \emph{self-consistency error}.
Intuitively, training on the flow matching error alone yields a flow model, from which we can sample actions by numerically solving the flow ODE in Equation \ref{eq:flow-ode} with the Euler method. Using more discretization steps $\dreg$ in the Euler method implies a smaller step size (i.e. $h_{\min}=1/\dreg$ is small), leading to a smaller discretization error. However, the Euler method requires $\Theta(h_{\min}^{-1})$ number of iterations, so a smaller step size $h_{\min}$ is more computationally expensive. By incorporating the self-consistency error into the training objective, the shortcut model approximates two iteration steps of the Euler method into a single step. By iteratively approximating with larger step sizes, the shortcut model achieves discretization error comparable to that of the Euler method with a small step size, while only requiring a constant computational cost.

\paragraph{Sequential Scaling.} 
By training a shortcut model with self-consistency, \algo's policy can perform inference under varying inference budgets. In other words, \algo can sample actions in Algorithm \ref{alg:sampling} with a varying number of inference steps $\dinf$, including one step. 
In order to implement sequential scaling, we simply run the sampling procedure in Algorithm \ref{alg:sampling} for a greater number of steps (i.e. a larger $\dinf$), up to the number of discretization steps $\dreg$ used during training. 

\paragraph{Parallel Scaling.}
We also desire an approach to inference-time scaling that is independent of the number of inference steps. We incorporate best-of-$N$ sampling \citep{lightman2023let,brown2024largelanguagemonkeysscaling}, following the simple procedure: sample actions independently and use a verifier to select the best sample. In \algo, we use the trained Q function as the verifier. We implement best-of-$N$ sampling as follows: given a state $x$, sample $a_1, a_2, \ldots, a_N$ independently from the policy $\pi_\theta(a \mid x)$ and select the action with the largest $Q$ value, such that
\begin{equation}
  \argmax_{a \in \{a_1, a_2, \ldots, a_N\}} Q(x, a)
\end{equation}

\section{Theoretical Analysis: Regularization To Behavior Policy}
Offline RL aims to learn a policy that does not deviate too far from the offline data, in order to avoid test-time distribution shift \citep{levine2020offline}. In this section, we theoretically examine the question:
\begin{center}
    \textit{Will \algo learn a policy that is regularized to the behavior of the offline data?}
\end{center}
Through a novel analysis of shortcut models, we prove that the is \textit{yes}.

In the training objective in Equation~\ref{eq:training-loss}, we include the BC-style flow matching loss $\Lcal_{\text{FM}}$ and self-consistency loss $\Lcal_{\text{SC}}$. The former ensures closeness to the offline data, while the latter allows for fast action generation. In this section, we demonstrate that this training objective
can be interpreted as an instantiation of the constrained policy optimization in Equation~\ref{eq:policy-opt-w-reg}. Unlike the KL-style regularization in Equation~\ref{eq:policy-opt-w-kl-reg}, the $\Lcal_{\text{FM}} + \Lcal_{\text{SC}}$ term in the objective function is a \emph{Wasserstein behavioral regularization}, similar to that of~\cite{park2025flow}. We show that under proper conditions, the shortcut model will generate a distribution close to the target in 2-Wasserstein distance ($\text{W}_2$) in Euclidean norm. 
This implies that as long as we minimize $\Lcal_{\text{FM}} + \Lcal_{\text{SC}}$, we ensure the policies induced by the shortcut model will stay close to the behavior policy in $\text{W}_2$ distance.  

Following the setup of the shortcut model in Section~\ref{sec:background}, we first assume the shortcut model $s(\cdot,\cdot,\cdot)$ is trained properly (i.e. flow-matching and the self-consistency losses are minimized well). In other words, the shortcut model for the smallest step size $s(\cdot,\cdot,\frac{1}{M})$ is close to the ground truth drift function $v_t(\cdot)$, and $s(\cdot,\cdot,2h)$ is consistent with $s(\cdot,\cdot,h)$ on the evaluated time steps in inference.

\begin{table*}[t]
    \vspace{-10pt}
    \caption{
    \textbf{\algo's Overall Performance.} \algo achieves the best performance on 5 of the 8 environments evaluated, for a total of 40 unique tasks. The performance is averaged over 8 seeds, with standard deviations reported. The baseline results are reported from \citet{park2025flow}'s extensive tuning and evaluation of baselines on OGBench tasks. We present the full results in Appendix \ref{appendix:full_results}. 
    }
    \label{table:offline_table_envs}
    \centering
    \vspace{5pt}
    \scalebox{0.58}
    {
    \begin{threeparttable}
    \begin{tabular}{lccccccccccc}
    \toprule
    \multicolumn{1}{c}{} & \multicolumn{3}{c}{\texttt{Gaussian}} & \multicolumn{3}{c}{\texttt{Diffusion}} & \multicolumn{4}{c}{\texttt{Flow}} & \multicolumn{1}{c}{\texttt{Shortcut}} \\
    \cmidrule(lr){2-4} \cmidrule(lr){5-7} \cmidrule(lr){8-11} \cmidrule(lr){12-12}
    \texttt{Task Category} & \texttt{BC} & \texttt{IQL} & \texttt{ReBRAC} & \texttt{IDQL} & \texttt{SRPO} & \texttt{CAC} & \texttt{FAWAC} & \texttt{FBRAC} & \texttt{IFQL} & \texttt{FQL} & \texttt{\color{pblue}\algo} \\
    \midrule
    
    \texttt{OGBench antmaze-large-singletask ($\mathbf{5}$ tasks)} & $11$ {\tiny $\pm 1$} & $53$ {\tiny $\pm 3$} & $81$ {\tiny $\pm 5$} & $21$ {\tiny $\pm 5$} & $11$ {\tiny $\pm 4$} & $33$ {\tiny $\pm 4$} & $6$ {\tiny $\pm 1$} & $60$ {\tiny $\pm 6$} & $28$ {\tiny $\pm 5$} & $79$ {\tiny $\pm 3$} & $\textbf{89}$ {\tiny $\pm 2$} \\
    \texttt{OGBench antmaze-giant-singletask ($\mathbf{5}$ tasks)} & $0$ {\tiny $\pm 0$} & $4$ {\tiny $\pm 1$} & $\mathbf{26}$ {\tiny $\pm 8$} & $0$ {\tiny $\pm 0$} & $0$ {\tiny $\pm 0$} & $0$ {\tiny $\pm 0$} & $0$ {\tiny $\pm 0$} & $4$ {\tiny $\pm 4$} & $3$ {\tiny $\pm 2$} & $9$ {\tiny $\pm 6$} & $9$ {\tiny $\pm 6$} \\
    \texttt{OGBench humanoidmaze-medium-singletask ($\mathbf{5}$ tasks)} & $2$ {\tiny $\pm 1$} & $33$ {\tiny $\pm 2$} & $22$ {\tiny $\pm 8$} & $1$ {\tiny $\pm 0$} & $1$ {\tiny $\pm 1$} & $53$ {\tiny $\pm 8$} & $19$ {\tiny $\pm 1$} & $38$ {\tiny $\pm 5$} & $60$ {\tiny $\pm 14$} & $58$ {\tiny $\pm 5$} & $\textbf{64}$ {\tiny $\pm 4$} \\
    \texttt{OGBench humanoidmaze-large-singletask ($\mathbf{5}$ tasks)} & $1$ {\tiny $\pm 0$} & $2$ {\tiny $\pm 1$} & $2$ {\tiny $\pm 1$} & $1$ {\tiny $\pm 0$} & $0$ {\tiny $\pm 0$} & $0$ {\tiny $\pm 0$} & $0$ {\tiny $\pm 0$} & $2$ {\tiny $\pm 0$} & $\mathbf{11}$ {\tiny $\pm 2$} & $4$ {\tiny $\pm 2$} & $5$ {\tiny $\pm 2$} \\
    \texttt{OGBench antsoccer-arena-singletask ($\mathbf{5}$ tasks)} & $1$ {\tiny $\pm 0$} & $8$ {\tiny $\pm 2$} & $0$ {\tiny $\pm 0$} & $12$ {\tiny $\pm 4$} & $1$ {\tiny $\pm 0$} & $2$ {\tiny $\pm 4$} & $12$ {\tiny $\pm 0$} & $16$ {\tiny $\pm 1$} & $33$ {\tiny $\pm 6$} & $60$ {\tiny $\pm 2$} & $\textbf{69}$ {\tiny $\pm 2$} \\
    \texttt{OGBench cube-single-singletask ($\mathbf{5}$ tasks)} & $5$ {\tiny $\pm 1$} & $83$ {\tiny $\pm 3$} & $91$ {\tiny $\pm 2$} & $\mathbf{95}$ {\tiny $\pm 2$} & $80$ {\tiny $\pm 5$} & $85$ {\tiny $\pm 9$} & $81$ {\tiny $\pm 4$} & $79$ {\tiny $\pm 7$} & $79$ {\tiny $\pm 2$} & $\mathbf{96}$ {\tiny $\pm 1$} & $\textbf{97}$ {\tiny $\pm 1$} \\
    \texttt{OGBench cube-double-singletask ($\mathbf{5}$ tasks)} & $2$ {\tiny $\pm 1$} & $7$ {\tiny $\pm 1$} & $12$ {\tiny $\pm 1$} & $15$ {\tiny $\pm 6$} & $2$ {\tiny $\pm 1$} & $6$ {\tiny $\pm 2$} & $5$ {\tiny $\pm 2$} & $15$ {\tiny $\pm 3$} & $14$ {\tiny $\pm 3$} & $\mathbf{29}$ {\tiny $\pm 2$} & $25$ {\tiny $\pm 3$} \\
    \texttt{OGBench scene-singletask ($\mathbf{5}$ tasks)} & $5$ {\tiny $\pm 1$} & $28$ {\tiny $\pm 1$} & $41$ {\tiny $\pm 3$} & $46$ {\tiny $\pm 3$} & $20$ {\tiny $\pm 1$} & $40$ {\tiny $\pm 7$} & $30$ {\tiny $\pm 3$} & $45$ {\tiny $\pm 5$} & $30$ {\tiny $\pm 3$} & $\mathbf{56}$ {\tiny $\pm 2$} & $\textbf{57}$ {\tiny $\pm 2$} \\

    \bottomrule
    \end{tabular}
    \end{threeparttable}
    }
    \end{table*}

\begin{assum}[Small Flow Matching and Self-Consistency Losses]
\label{assum:fm-cl-err}
There exist $\epsilon_{\text{FM}} > 0$ and $\epsilon_{\text{SC}}>0$, s.t.
\begin{itemize}
    \item for all $t = 0, \frac{1}{M}, \frac{2}{M}, \ldots, 1-\frac{1}{M}$, we have $\EE_{z_t \sim p_t}\left[\|s(z_t,t,\frac{1}{M}) - v_t(z_t)\|_2^2\right] \le \epsilon_{\text{FM}}^2$;
    \item for all $h = \frac{1}{M}, \frac{2}{M}, \frac{2^2}{M}, \ldots, \frac{1}{2}$, and $t = 0,h,2h,\ldots, 1-h$, we have
    \begin{align*}
        \EE_{z_t \sim p_t}\left[\left\|s(z_t, t, h)/2 + s(z'_{t+h}, t, h)/2 - s(z_t,t,2h)\right\|_2^2\right] \le \epsilon_{\text{SC}}^2
    \end{align*}
    where $z'_{t+h} = z_t + s(z_t, t, h)h$.
\end{itemize}    
\end{assum}

\begin{theorem}[Regularization To Behavior Policy] \label{thm:shortcut-conv}
Suppose the shortcut model $s(z,t,h)$ is $L$-Lipschitz in $z$ for all $t$ and $h$, the drift function $v_t(z)$ is $L_v$-Lipschitz in $z$ for all $t$, $\sup_t\EE_{z_t\sim p_t}\left[\left\|v_t\right\|_2^2\right] \le M_v$ and $L/M < 1$. If Assumption~\ref{assum:fm-cl-err} holds, then for all $h = \frac{1}{M}, \frac{2}{M}, \frac{2^2}{M}, \ldots, \frac{1}{2},1$
    \begin{align}
    \text{W}_2(\hat p^{(h)}, p^{\star})
    \le 
    \frac{1}{L}e^{\frac{3}{2}L}\left(
    \underbrace{\frac{eL_v}{M}\left(M_v+1\right)}_{\textnormal{\texttt{Discretization Error}}}
    + 
    \underbrace{\epsilon_{\text{FM}}\vphantom{\frac{eL_v}{M}\left(M_v+1\right)}}_{\textnormal{\texttt{Flow Matching Error}}}
    + 
    \underbrace{\epsilon_{\text{SC}}\log_2 M\vphantom{\frac{eL_v}{M}\left(M_v+1\right)}}_{\textnormal{\texttt{Self-Consistency Error}}}
    \right)
    \end{align}
    where $\hat p^{(h)}$ is the distribution of samples generated by the shortcut model with step size $h$ and $p^\star$ is the data distribution.\footnote{For a cleaner presentation, we consider the unconditional setting and show a uniform upper bound on the Wasserstein distance for all step size $h$. We defer an $h$-dependent upper bound to Appendix~\ref{appendix:proofs}.
    }
\end{theorem}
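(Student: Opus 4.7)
The plan is to decompose the distributional error by a triangle inequality for $W_2$ and then control three separate sources of error: (i) the discretization error of the Euler method applied to the \emph{exact} flow ODE, (ii) the flow-matching error at the smallest step size $1/M$, and (iii) the accumulated self-consistency error as we progressively double the step size from $1/M$ up to $h$. Throughout, I would couple all processes to a common Gaussian sample $\bar z_0 \sim \Ncal(0, I)$ so that each intermediate inequality becomes an $L^2$ bound on the coupled pathwise difference, which upper-bounds the corresponding $W_2$ distance.

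For Stage 1, I would compare three processes: the exact flow $\bar z_t$ (whose law at $t=1$ is $p^{\star}$), the true-drift Euler discretization $\tilde z_t$ driven by $v_t$ with step $1/M$, and the shortcut Euler discretization $\hat z_t$ driven by $s(\cdot, \cdot, 1/M)$. Standard local-truncation analysis for $\bar z - \tilde z$ uses the Lipschitz constant $L_v$ and the second-moment bound $M_v$ on $v_t$ to yield $\EE[\|\bar z_1 - \tilde z_1\|_2^2]^{1/2} = O(L_v(M_v+1)/M)$, which supplies the discretization-error term. For $\tilde z - \hat z$, writing the one-step recursion and splitting $s(\hat z_t, t, 1/M) - v_t(\tilde z_t)$ into a Lipschitz contraction part plus a flow-matching residual gives a discrete Grönwall inequality $\|\tilde z_{t+1/M} - \hat z_{t+1/M}\|_{L^2} \le (1 + L/M)\|\tilde z_t - \hat z_t\|_{L^2} + \epsilon_{\text{FM}}/M$, whose iteration over $M$ steps yields a bound of order $\tfrac{1}{L}(e^L - 1)\epsilon_{\text{FM}}$ under the assumption $L/M < 1$.

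For Stage 2, I would prove by induction on $k = 0, 1, \ldots, \log_2 M$ that $W_2(\hat p^{(2^{k+1}/M)}, \hat p^{(2^k/M)}) \le C \epsilon_{\text{SC}}$ for $C = O(e^L/L)$. Coupling the two shortcut trajectories through a shared initialization, at each time slice of width $2h$ I would compare one step of size $2h$ from the larger-step trajectory against two consecutive steps of size $h$ from the smaller-step trajectory. Self-consistency directly bounds the $L^2$ gap contributed at each such slice by $\epsilon_{\text{SC}}$; the Lipschitz property of $s$ in $z$ propagates the accumulated error contractively, again via a discrete Grönwall argument over the $1/(2h) \le M$ time slices. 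Summing the telescoping bound over the at most $\log_2 M$ doublings produces the $\epsilon_{\text{SC}} \log_2 M$ term, and combining with Stage 1 via the $W_2$ triangle inequality yields the claimed bound with the common prefactor $\tfrac{1}{L} e^{3L/2}$, where the $3L/2$ exponent absorbs the two separate Grönwall invocations.

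The main obstacle I anticipate is keeping the Grönwall-type arguments valid under Assumption~\ref{assum:fm-cl-err}, since both error hypotheses are stated in expectation over $z_t \sim p_t$, whereas the propagated errors in the coupled trajectories live on \emph{perturbed} distributions $\hat p_t^{(h)}$ that need not equal $p_t$. Handling this cleanly requires either adding and subtracting evaluations of $s$ at on-distribution points and invoking Lipschitzness to shift between distributions, or using a change-of-measure style argument; the condition $L/M < 1$ is what keeps the resulting $(1+L/M)^M$ factors bounded by $e^L$ so that the cumulative contraction constant stays finite. Once this technical step is handled, the remaining computations are routine algebra to assemble the three error contributions under the common prefactor.
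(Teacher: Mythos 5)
Your Stage 1 is sound and corresponds to the paper's Lemma~\ref{lem:h0-single-step-error} combined with Lemma~\ref{lem:multi-step-error} applied at the finest step size. The gap is in Stage 2. You propose to telescope over whole generated distributions, bounding $\text{W}_2(\hat p^{(2h)},\hat p^{(h)})\le C\epsilon_{\text{SC}}$ by coupling the two trajectories and invoking self-consistency once per time slice of width $2h$. But Assumption~\ref{assum:fm-cl-err} bounds the self-consistency residual only in expectation over $z_t\sim p_t$, whereas your slice comparison must evaluate that residual at $\hat z_t^{(h)}\sim\hat p_t^{(h)}$. You flag this obstacle, but the fix you sketch (Lipschitz shifting) yields $\|r(\hat z_t^{(h)})\|_{L^2}\le\epsilon_{\text{SC}}+\Theta(L)\,\|\hat z_t^{(h)}-z_t\|_{L^2}$, and the deviation $\|\hat z_t^{(h)}-z_t\|_{L^2}$ is itself of the order of the full error bound you are trying to prove --- it contains the discretization and flow-matching contributions --- not a quantity that is small relative to $\epsilon_{\text{SC}}$. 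Feeding this into your Gr\"onwall recursion and summing over the $\log_2 M$ doublings either contaminates the final bound with an extra factor of order $L\log_2 M$ multiplying the discretization and flow-matching errors (a strictly weaker statement than the theorem), or forces a circular induction on the very quantity being controlled. So the claim that ``the remaining computations are routine algebra'' once this step is handled is not right: this is precisely where your decomposition breaks.

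The paper avoids the issue by decomposing at the level of single steps rather than whole trajectories. Lemma~\ref{lem:h-single-step-error} defines $\Delta_{h'}$ as the \emph{one-step} error of the size-$h'$ shortcut against the exact flow map, with the starting point drawn from the true marginal $p_t$, and proves the recursion $\Delta_{2h'}\le 2h'\epsilon_{\text{SC}}+(2+Lh')\Delta_{h'}$. In that recursion the self-consistency assumption is only ever invoked at on-distribution points; the single off-distribution point (the intermediate $\hat z_{t+h'}$ after one sub-step) is compared to the on-distribution $z_{t+h'}$, whose deviation is exactly $\Delta_{h'}$ --- the quantity being recursed on --- so no external trajectory-deviation term enters. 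Only after the single-step error at the target size $h$ is controlled does the proof accumulate error over the $1/h$ steps of the final trajectory (Lemma~\ref{lem:multi-step-error}), which is the analogue of your Stage-1 Gr\"onwall argument and accounts for the single $e^{\frac{3}{2}L}$-type prefactor. To salvage your route you would need either to restate the self-consistency hypothesis over the generated marginals $\hat p_t^{(h)}$ (arguably closer to what training controls, but not what Assumption~\ref{assum:fm-cl-err} says), or to restructure Stage 2 along the paper's step-level lines.
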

\paragraph{Discretization Error.} In the upper bound, the term $\frac{eL_v}{M}(M_v + 1)$ corresponds to the \textit{Euler discretization error}---a well-known quantity in numerical ODE solvers---which vanishes as the number of discretization steps $M \to \infty$. This term captures the inherent error from approximating continuous flows with finite-step shortcut models. 

\paragraph{Flow Matching and Self-Consistency Errors.} The terms $ \epsilon_{\text{FM}} $ and $ \epsilon_{\text{SC}} \log_2 M $ account for the training approximation errors: $ \epsilon_{\text{FM}} $ measures the deviation between the shortcut model’s predicted velocity and the true drift under small step sizes, while $ \epsilon_{\text{SC}} \log_2 M $ captures the cumulative consistency error over varying step sizes. Notably, when both $ \Lcal_{\text{FM}} $ and $ \Lcal_{\text{SC}} $ are minimized effectively during training, these errors become negligible. Ignoring the self-consistency loss, our result is comparable to the guarantees for flow models in \citet{roy20242}. However, our analysis incorporates self-consistency and validates it across all discretization levels, which is a novel contribution. 

\paragraph{Regularization To Behavior Policy.} Theorem~\ref{thm:shortcut-conv} shows that when trained properly, the shortcut model generates a distribution close to the target distribution in 2-Wasserstein distance for all step sizes $h$. Thus, the BC-style flow matching loss $\Lcal_{\text{FM}}$ and self-consistency loss $\Lcal_{\text{SC}}$ in the training objective (Equation~\ref{eq:training-loss}) enforce regularization to behavior policy in Wasserstein distance. Consequently, \algo not only learns a performant policy, but also guarantees that the learned policy remains close to the offline data distribution across varying step sizes.

\section{Experiments}
In this section, we evaluate the overall performance of \algo against 10 baselines across 40 tasks. We then investigate \algo's sequential scaling and parallel scaling trends. Furthermore, we present additional results in Appendix \ref{appendix:full_results} and ablation studies in Appendix \ref{appendix:ablations}.

\subsection{Experimental Setup}
\paragraph{Environments and Tasks.} We evaluate \algo on locomotion and manipulation robotics tasks in the OGBench task suite \citep{park2024ogbench}. The experimental setup in this section follows the setup suggested by \citet{park2024ogbench, park2025flow}. We document the complete implementation details in Appendix \ref{appen:impl_det}. Following \citet{park2025flow}, we use OGBench's reward-based \texttt{singletask} variants for all experiments, which are best suited for reward-maximizing RL.  

\paragraph{Baselines.} We evaluate against three Gaussian-based offline RL algorithms (\texttt{BC} \citep{pomerleau1988alvinn}, \texttt{IQL} \citep{kostrikov2021offline}, \texttt{ReBRAC} \citep{tarasov2023revisiting}), three diffusion-based algorithms (\texttt{IDQL} \citep{hansen2023idql}, \texttt{SRPO} \citep{chen2023score}, \texttt{CAC} \citep{ding2023consistency}), and four flow-based algorithms (\texttt{FAWAC} \citep{nair2020awac,park2025flow}, \texttt{FBRAC} \citep{zhang2025energy,park2025flow}, \texttt{IFQL} \citep{wang2022diffusion,park2025flow}, \texttt{FQL} \citep{park2025flow}). For the baselines we compare against in this paper, we report results from \citet{park2025flow}, who performed an extensive tuning and evaluation of the aforementioned baselines on OGBench tasks. We provide a thorough discussion of the baselines in Appendix \ref{appen:impl_det}. 

\begin{figure}[t!]
  \centering
  \includegraphics[width=0.30\textwidth]{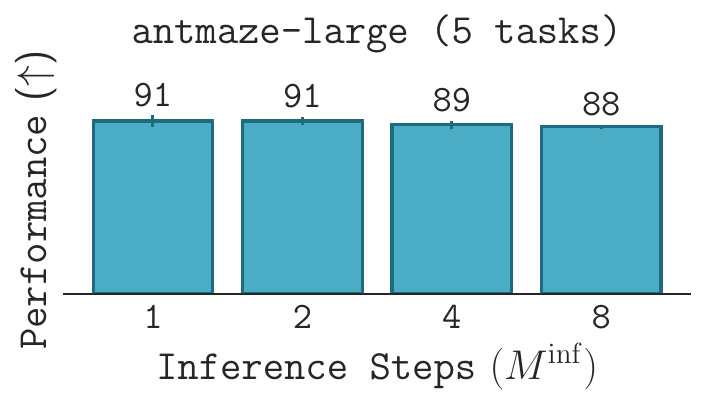}
  \includegraphics[width=0.30\textwidth]{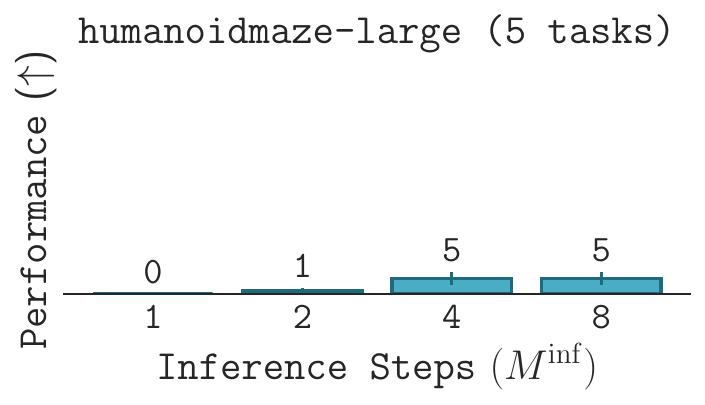} \\
  \includegraphics[width=0.30\textwidth]{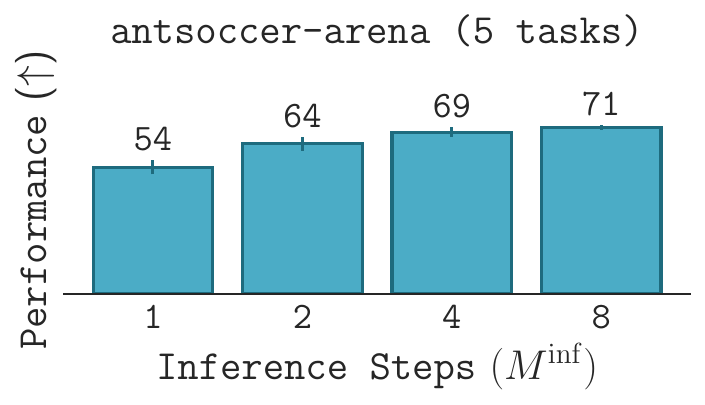}
  \includegraphics[width=0.30\textwidth]{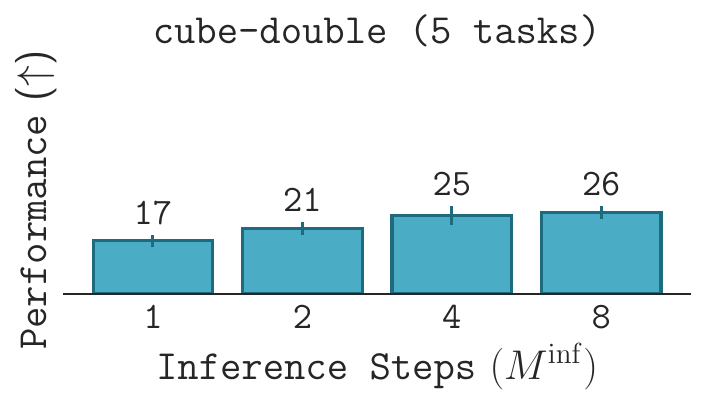}
  \includegraphics[width=0.30\textwidth]{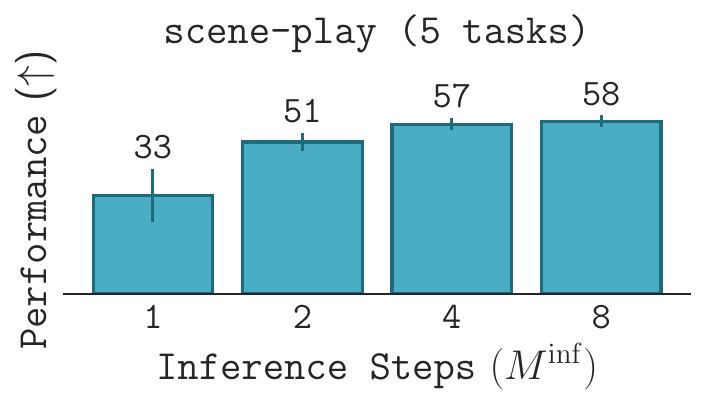}
  \caption{\textbf{\algo's Sequential Scaling.} For a fixed training budget, \algo generally improves performance with greater test-time compute. We fix a training budget of discretization steps and backpropagation steps through time ($\dreg = \dbtt = 8$) and vary the inference budget via the number of inference steps $\dinf$. The performance is averaged over 8 seeds for each task, with 5 tasks per environment, and standard deviations reported.
  }
  \label{fig:sec-scal}
\end{figure}

\paragraph{Evaluation.}  Following \citet{park2025flow}, we ensure a fair comparison by using the same network size, number of gradient steps, and discount factor for all algorithms. Furthermore, we hyperparameter tune \algo with a similar training budget to the baselines: we only tune one of \algo's training parameters on the \textit{default} task in each environment. We average over 8 seeds per task and report standard deviations in tables. We bold values at 95\% of the best performance in tables. For \algo, we use 8 discretization steps during training (2 fewer than the baselines), since \algo requires that discretization steps be powers of 2. We detail all other parameters in Appendix~\ref{appen:impl_det}.

\begin{figure}
  \centering
  \centering
  \includegraphics[width=0.60\textwidth]{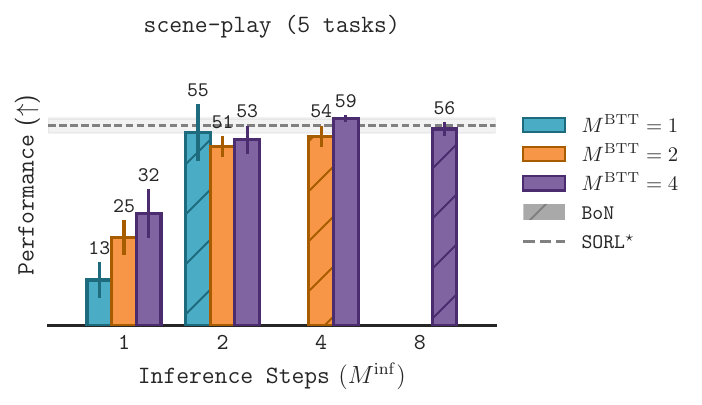}
  \caption{\textbf{\algo's Parallel Scaling.} \algo generalizes to \textit{new} inference steps at test-time, beyond what was optimized through backpropagation during training. For each fixed training budget (i.e. fixed number of discretization steps $\dreg$ and backpropagation through time steps $\dbtt$), we evaluate with varying inference steps $\dinf$. $\dbtt$ denotes the maximum number of steps used for backpropagation through time in the $Q$ update. The $\star$ hatch denotes best-of-$N$ sampling, with $N=8$, where the number of inference steps is \textit{greater} than the number of backpropagation steps through time (i.e. $\dinf > \dbtt$). $\algo^{\star}$ denotes the best performance achieved by \algo in Table \ref{table:offline_table_envs}. Results are averaged over 8 seeds for each of the 5 tasks. 
  }
  \label{fig:par-scal}
\end{figure}

\subsection{Experimental Results}
\label{sec:results:over-perf}

\textbf{Q: Does \algo perform well on different environments?}

\textit{Yes, \algo achieves the best performance on the majority of the diverse set of 40 tasks.}

We present \algo's overall performance across a range of environments in Table \ref{table:offline_table_envs}. Notably, \algo achieves the best performance on 5 out of 8 environments, including substantial improvements over the baselines on \texttt{antmaze-large} and \texttt{antsoccer-arena}. The results suggest that while distillation approaches like \texttt{FQL} \citep{park2025flow} can achieve high performance, some tasks require greater expressiveness or precision than can be achieved from one-step policies.

\textbf{Q: For a fixed training budget, can \algo improve performance purely by test-time scaling?} 

\textit{Yes, \algo's performance is improved by increasing the number of inference steps at test-time.}

We investigate \algo's sequential scaling by plotting the results of varying inference steps $\dinf$ given the same, fixed training budget (i.e. holding the discretization steps $\dreg$ and the steps backpropagated through time $\dbtt$ constant), for $\dinf \leq \dbtt$. We plot the results in Figure~\ref{fig:sec-scal}. 
The results show noticeable improvement in performance as the number of inference steps increases, suggesting that \algo scales positively with greater test-time compute.

\textbf{Q: Can \algo make up for less training-time compute with greater test-time compute?}

\textit{Yes, given less training compute, \algo can match optimal performance with greater test-time compute.}

We reduce the training budget by reducing the steps of backpropagation through time from $\dbtt=8$ for Table \ref{table:offline_table_envs} and Figure \ref{fig:sec-scal} to $\dbtt=1, 2, 4$ for Figure \ref{fig:par-scal}. However, through a combination of inference
sequential and parallel scaling, we recover the performance of the optimal policy from the best training budget ($\algo^\star$).

\textbf{Q: At test-time, can \algo generalize to inference steps beyond what it was trained on?}

\textit{Yes, through sequential and parallel scaling, \algo can use a greater number of inference steps at test-time than the number of backpropagation steps used during training.}

Given a fixed training budget (i.e. a fixed number of discretization steps $\dreg$ and backpropagation steps through time $\dbtt$), we evaluate on an increasing number of inference steps $\dinf$, coupled with best-of-$N$ sampling. The bar colors in Figure \ref{fig:par-scal} denote different training compute, dictated by the number of backpropagation through time steps $\dbtt$. 
We apply best-of-$N$ sampling on a greater number of inference steps than were used for backpropagation through time (i.e. $\dinf > \dbtt$), thus testing \algo's ability to generalize to inference steps beyond what it was backpropagated on. From Figure \ref{fig:par-scal}, \algo \textit{can} use more inference steps than the number of backpropagation steps used during training (i.e. the number of steps used for backpropagation through time $\dbtt$), up to a performance saturation point (approximately $\dbtt=4$).

Although Figure~\ref{fig:par-scal} shows that best-of-$N$ sampling can improve performance, the gain is \textit{not} theoretically guaranteed. Our verifier is the \emph{learned} value estimator, not a ground-truth reward, so there is no statistical benefit to learning a verifier \citep{swamy2025all}.\footnote{In other words, if the actor were already acting greedily with respect to $Q_\phi$ (i.e. the case of perfect optimization), ranking additional samples by $Q_\phi$ cannot raise the expected value.} 
Empirically, however, best-of-$N$ effectively functions as an additional, inference-time \emph{policy-extraction} step, searching over nearby actions and selecting the one with the highest $Q_\phi$---similar to \citet{park2024value}'s work. Thus, the post-hoc filtering may still be empirically beneficial in regaining performance lost to imperfect optimization.

\section{Related Work}
\label{sec:related-work}
\paragraph{Offline Reinforcement Learning.} The goal of offline reinforcement learning (RL)~\citep{levine2020offline} is to learn a policy solely through previously collected data, without further interaction with the environment. A considerable amount of prior work has been developed, with the core idea being to maximize returns while minimizing a discrepancy measure between the state-action distribution of the dataset and that of the learned policy. This goal has been pursued through various strategies: behavioral regularization~\citep{nair2020awac,fujimoto2021minimalist,tarasov2023revisiting}, conservative value estimation~\citep{kumar2020conservative}, in-distribution maximization~\citep{kostrikov2021offline,xu2023offline,garg2023extreme}, out-of-distribution detection~\citep{yu2020mopo,kidambi2020morel,an2021uncertainty,nikulin2023anti}, dual formulations of RL~\citep{lee2021optidice,sikchi2023dual}, and representing policy models using generative modeling~\cite{chen2021decision,janner2021offline,janner2022planning,park2025flow}. After training an offline RL policy, it can be further fine-tuned with additional online rollouts, which is referred to as offline-to-online RL, for which several techniques have been proposed~\citep{lee2021optidice,song2022hybrid,nakamoto2023cal,ball2023efficient,yu2023actor,ren2024hybrid}. 

\paragraph{Reinforcement Learning with Generative Models.} Motivated by the recent success of iterative generative modeling techniques, such as denoising diffusion~\citep{sohl2015deep,ho2020denoising,song2021scorebased} and flow matching~\citep{lipman2024flow,esser2024scaling}, the use of generative models as a policy class for imitation learning and reinforcement learning has shown promise due to its expressiveness for multimodal action distributions~\citep{wang2022diffusion,ren2024diffusion,wu2024diffusing,black2024pi0}. However, its iterative noise sampling process leads to a large time consumption and memory occupancy~\citep{ding2023consistency}. Some methods utilize a two-stage approach: first training an iterative sampling model before then distilling it~\cite{frans2024one,ho2020denoising,meng2023distillation}, but this may lead to performance degradation and introduce complexity from the distillation models, along with error compounding across the two-stage procedure. Consistency models~\citep{song2023improved} are another type of unified model, but they rely on extensive bootstrapping, such as requiring a specific learning schedule during training \citep{frans2024one}, making them difficult to train. 

\paragraph{Inference-Time Scaling.}
Recent advances in large language models (LLMs)~\citep{openai2024openaio1card, deepseekai2025deepseekr1incentivizingreasoningcapability} have demonstrated the ability to increase performance at inference time with more compute through parallel scaling methods~\citep{wang2023selfconsistencyimproveschainthought, gui2024bonbon, brown2024largelanguagemonkeysscaling,pan2025learning}, sequential scaling methods that extend the depth of reasoning by increasing the chain of thought budget~\citep{wei2023chainofthoughtpromptingelicitsreasoning, muennighoff2025s1simpletesttimescaling,qin2025backtrack}, and self-correcting methods~\citep{madaan2023selfrefineiterativerefinementselffeedback, qu2024recursiveintrospectionteachinglanguage, kumar2024traininglanguagemodelsselfcorrect}.

Generative models such as diffusion and flow models inherently support inference-time sequential scaling by varying the number of denoising steps~\citep{sohl2015deep, lipman2024flow}, and recent work extends their inference-time scaling capabilities through parallel scaling methods via reward-guided sampling~\citep{ma2025inferencetimescalingdiffusionmodels, singhal2025generalframeworkinferencetimescaling}. However, prior reinforcement learning methods that use generative policies lose this inherent inference-time sequential scaling since they require the same number of denoising steps at both training and inference \citep{wang2022diffusion, kang2023efficientdiffusionpoliciesoffline, zhang2024entropy}. In contrast, \algo supports both sequential and parallel scaling at inference time, allowing for dynamic trade-offs between compute and performance, and improved action selection via the learned $Q$ function. 

Independent of generative models, prior work has proposed applying a form of best-of-$N$ sampling to actions from the \textit{behavior} policy (i.e. \textit{behavioral} candidates) \citep{chen2022offline,fujimoto2019off,ghasemipour2021emaq,hansen2023idql,park2024value}. \citet{park2024value} proposed two methods of test-time policy improvement, by using the gradient of the $Q$-function. One of \citet{park2024value}'s approaches relies on leveraging test-time states, which \algo's parallel scaling method does not require. The second approach proposed by \citet{park2024value} adjusts actions using the gradient of the learned $Q$-function, which is conceptually similar to our approach of best-of-$N$ sampling with the $Q$-function verifier. However, their method requires an additional hyperparameter to tune the update magnitude in gradient space.

\section{Discussion}
\label{sec:discussion}

We introduce \algo: a simple, efficient one-stage training procedure for expressive policies in offline RL. \algo's key property is \textit{self-consistency}, which enables expressive inference under \textit{any} inference budget, including one-step. Theoretically, we prove that \algo regularizes to the behavior policy, using a novel analysis of shortcut models. Empirically, \algo demonstrates the best performance across a range of diverse tasks. Additionally, \algo can be scaled at test-time, empirically demonstrating how greater compute at inference-time can further improve performance. 
An avenue for future work is to investigate how \algo can incorporate adaptive test-time scaling \citep{pan2025learning,ma2025inferencetimescalingdiffusionmodels}, such as selecting the number of inference steps based on the gradient of the $Q$-function.

\newpage
\bibliography{bibliography}
\bibliographystyle{plainnat}

\newpage
\appendix
\section{Limitations}
\label{appendix:limitations}

As noted in Section \ref{sec:results:over-perf}, the positive trend in parallel scaling may not occur across all environments. Our approach to parallel scaling uses the Q function as a verifier. If the learned Q function is inaccurate or out of distribution, then additional optimization may not be beneficial \citep{levine2020offline}. Additionally, while \algo is highly \textit{flexible}, allowing for scaling of both training-time and inference-time compute budgets, \algo generally has a longer training runtime than \fql, one of the fastest flow-based baselines \citep{park2025flow}, as evidenced by the runtime comparison (Figure~\ref{fig:runtime}) in Appendix \ref{appendix:full_results}. We believe that trading off greater training runtime for improved performance is desirable in offline RL, since training does not require interaction with an expert, environment, or simulator. Finally, as noted in \citet{park2025flow}, \textit{offline} RL algorithms, including \algo, lack a principled exploration strategy that may be necessary for attaining optimal performance in \textit{online} RL.

\newpage
\section{Proofs}
\label{appendix:proofs}
We first present the upper bound on the Wasserstein distance with explicit dependency on $h$.
\begin{theorem}[Restatement of Theorem~\ref{thm:shortcut-conv} With Explicit Dependency on $h$] \label{thm:shortcut-conv-explicit-bound}
Suppose the shortcut model $s(z,t,h)$ is $L$-Lipschitz in $z$ for all $t$ and $h$, the drift function $v_t(z)$ is $L_v$-Lipschitz in $z$ for all $t$, $\sup_t\EE_{z_t\sim p_t}\left[\left\|v_t\right\|_2^2\right] \le M_v$ and $L/M < 1$. If Assumption~\ref{assum:fm-cl-err} holds, then for all $h = \frac{1}{M}, \frac{2}{M}, \frac{2^2}{M}, \ldots, \frac{1}{2},1$
    \begin{align*}
        \text{W}_2(\hat p^{(h)}, p^{\star})
        \le & \frac{1}{L}\left((1+Lh)^{\frac{1}{h}}-1\right)\exp\left(\frac{1}{2}L h\right)\left(\frac{eL_v}{M} \left(M_v+1\right) + \epsilon_{FM} + \epsilon_{\text{SC}} \log_2 (Mh)\right)
    \end{align*}
    where $\hat p^{(h)}$ is the distribution of samples generated by the shortcut model with step size $h$ and $p^\star$ is the data distribution.
\end{theorem}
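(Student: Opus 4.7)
The plan is to bound $\text{W}_2(\hat p^{(h)}, p^{\star})$ by coupling three processes driven by the same initial noise $z_0 \sim \mathcal{N}(0, I)$: (i) the exact solution $z_t$ of the flow ODE $\dot z_t = v_t(z_t)$, whose terminal law is $p^{\star}$; (ii) the Euler iterate $\tilde z^{(1/M)}$ that uses the \emph{true} drift $v_t$ with the smallest step size $1/M$; and (iii) the Euler iterates $\hat z^{(h)}$ that use the \emph{shortcut} drift $s(\cdot, \cdot, h)$ with step size $h \in \{1/M, 2/M, \dots, 1/2, 1\}$. Since $W_2$ is the infimum of $L^2$-coupling distances, it suffices to bound $\EE[\|z_1 - \hat z^{(h)}_1\|_2^2]^{1/2}$ under this synchronous coupling. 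The strategy is a triangle inequality: $W_2(\hat p^{(h)}, p^{\star}) \le \|z_1 - \tilde z^{(1/M)}_1\|_{L^2} + \|\tilde z^{(1/M)}_1 - \hat z^{(1/M)}_1\|_{L^2} + \|\hat z^{(1/M)}_1 - \hat z^{(h)}_1\|_{L^2}$, producing the discretization term, flow-matching term, and self-consistency term respectively.

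For the first term, I would apply the standard Euler discretization analysis for ODEs with $L_v$-Lipschitz drift: the local truncation error on each subinterval of length $1/M$ is controlled in $L^2$ by the drift's smoothness and the $M_v$ moment bound; summing local errors and applying discrete Gr\"onwall with factor $(1+L_v/M)^M \le e^{L_v}$ yields the $\frac{eL_v}{M}(M_v+1)$ contribution. For the second term, I would induct over the $M$ Euler steps: at each step the gap grows by at most $(1 + L/M)$ via $L$-Lipschitzness of $s$ plus an additive $\epsilon_{\text{FM}}/M$ from Assumption~\ref{assum:fm-cl-err}, so summing the geometric series gives an $\epsilon_{\text{FM}}$ bound with the same Gr\"onwall-type amplification.

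The heart of the proof is the self-consistency telescoping from step size $1/M$ up to $h$. I would prove a doubling lemma: for any dyadic $h' \in [1/M, 1/2]$, $\|\hat z^{(h')}_1 - \hat z^{(2h')}_1\|_{L^2} \le \frac{1}{L}((1+Lh')^{1/h'}-1)\exp(\tfrac{1}{2}Lh')\,\epsilon_{\text{SC}}$. To establish this, couple one step of size $2h'$ starting at some common point $z$ against two successive $h'$-steps; by Assumption~\ref{assum:fm-cl-err} their endpoints differ in $L^2$ by at most $2h'\epsilon_{\text{SC}}$ (after taking the self-consistency point $z'_{t+h'} = z + h' s(z,t,h')$ that appears in the assumption). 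Iterating over the $1/(2h')$ double-steps and amplifying the per-step gap by $(1+Lh')$ through Lipschitzness of $s(\cdot,\cdot,2h')$ yields the claimed bound on a single doubling. Then, triangulating over the $\log_2(Mh)$ doublings required to go from $1/M$ to $h$, and using that $(1+Lh)^{1/h}$ is monotone in $h$ under $Lh \le 1$, collects the $\epsilon_{\text{SC}}\log_2(Mh)$ term (which specializes to $\epsilon_{\text{SC}}\log_2 M$ and the $\frac{1}{L}e^{3L/2}$ prefactor in Theorem~\ref{thm:shortcut-conv} after uniformizing over $h$ and using $L/M < 1$).

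The main obstacle I anticipate is the doubling lemma and the careful bookkeeping of the Gr\"onwall amplification across scales. Specifically, when composing two $h'$-steps with $s(\cdot,\cdot,h')$, the evaluation point of the second step is itself a stochastic function of the first step's drift, so the self-consistency assumption---which is stated \emph{at} the self-consistent point $z'_{t+h'}$---must be invoked on the correct trajectory; any discrepancy between the ``idealized'' and ``actual'' second evaluation point has to be absorbed using the $L$-Lipschitz bound on $s$, and this is what produces the $\exp(\tfrac{1}{2}Lh')$ factor rather than a simple $\exp(Lh')$. Ensuring that the per-doubling errors add (rather than multiply) across the $\log_2(Mh)$ levels, and that the final prefactor remains $O(1)$ under $L/M<1$, requires handling the telescoping and the change of step size in the Lipschitz amplification simultaneously, which is where the explicit $h$-dependent form in Theorem~\ref{thm:shortcut-conv-explicit-bound} becomes essential before uniformizing to the cleaner statement in the main text.
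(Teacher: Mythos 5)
Your decomposition is genuinely different from the paper's: you triangulate at the level of \emph{terminal points of full trajectories} (exact ODE $\to$ Euler with the true drift at step $1/M$ $\to$ shortcut Euler at step $1/M$ $\to$ shortcut Euler at step $h$, via a dyadic doubling telescope), whereas the paper first bounds the \emph{single-step} error of the shortcut map $F^{(h)}$ against the true flow map $F$ over $[t,t+h]$ (Lemmas~\ref{lem:h0-single-step-error} and~\ref{lem:h-single-step-error}, with a recursion $\Delta_{2h'}\le 2h'\epsilon_{\text{SC}}+(2+Lh')\Delta_{h'}$ across dyadic levels) and only then composes $1/h$ such steps (Lemma~\ref{lem:multi-step-error}). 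This ordering is not cosmetic, and it is where your plan has a genuine gap. Assumption~\ref{assum:fm-cl-err} gives the flow-matching and self-consistency errors only in $L^2(p_t)$, i.e.\ in expectation over $z_t\sim p_t$, the marginal of the \emph{true} interpolation process. In your second and third terms you must invoke these assumptions along the intermediate reference trajectories $\tilde z^{(1/M)}_t$, $\hat z^{(1/M)}_t$, $\hat z^{(2h')}_t$, whose time-$t$ laws are \emph{not} $p_t$. The $L$-Lipschitzness of $s$ is a pointwise statement and cannot transfer an $L^2(p_t)$ bound on the residual $s(\cdot,t,h)/2+s(\cdot',t,h)/2-s(\cdot,t,2h)$ to an $L^2$ bound under the law of your approximate iterate; you would need to additionally control $\text{W}_2(\text{law}(\hat z_t^{(2h')}),p_t)$ and the Lipschitz constant of the residual itself, which makes the recursion circular or at least substantially more delicate. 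The paper's proof is structured precisely to avoid this: every invocation of Assumption~\ref{assum:fm-cl-err} happens at a point of the true trajectory ($z_t\sim p_t$, or $z_{t+h'}=F(z_t,t,t+h')\sim p_{t+h'}$), and the deviation of the approximate iterate from the true one is absorbed \emph{only} through the pointwise factor $(1+Lh)$, never through the assumption. The issue you flag (the second evaluation point $z'_{t+h'}$ versus the actual second step) is actually the one place where there is no discrepancy, since $z'_{t+h'}=z_t+h's(z_t,t,h')$ is exactly the point the assumption is stated at; the real mismatch is the outer distribution over $z_t$.

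Two secondary points. First, your three terms would each carry their own Gr\"onwall amplification (e.g.\ $e^{L_v}$ for the pure discretization term propagated through the true drift, versus the paper's uniform prefactor $\frac{1}{L}((1+Lh)^{1/h}-1)e^{Lh/2}$ which propagates everything through the shortcut dynamics), so you would obtain a bound of the same order but not the stated form. Second, $(1+Lh')^{1/h'}$ is \emph{decreasing} in $h'$, so summing your per-doubling bounds over the $\log_2(Mh)$ levels is dominated by the finest level and gives a prefactor like $\frac{1}{L}((1+L/M)^M-1)$ rather than the $h$-dependent one; this still collapses to the $\frac{1}{L}e^{3L/2}$ of Theorem~\ref{thm:shortcut-conv} but does not reproduce Theorem~\ref{thm:shortcut-conv-explicit-bound} exactly. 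The distributional anchoring issue is the one you would need to resolve before the rest of the bookkeeping matters.
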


\subsection{Proof of Theorem \ref{thm:shortcut-conv} / \ref{thm:shortcut-conv-explicit-bound}}

This theorem formalizes the key claim: minimizing the training objective keeps the learned policy close to the behavior policy. This closeness is measured using a strong metric---Wasserstein distance---implying the model will not ``drift'' far from the offline data distribution during generation. Under Lipschitz assumptions and small flow-matching/self-consistency loss (Assumption \ref{assum:fm-cl-err}), the shortcut model generates samples whose distribution is close to the target data distribution in 2-Wasserstein distance ($W_2$), uniformly over all discretization step sizes $h$. 

\paragraph{Proof Sketch.}
At a high level, our goal is to show that, starting from the same $z_0 \sim \Ncal(0,I)$, running the sampling process of the shortcut model and running the ground truth flow ODE yield similar output measured by square error averaged over $z_0 \sim \Ncal(0,I)$.  

This argument constructs a coupling with marginal distributions $\hat p^{(h)}$ and $p^{\star}$ and small transport cost. By definition, the Wasserstein distance between $\hat p^{(h)}$ and $p^{\star}$ is also small.

Our proof follows from three steps, described below. 

\paragraph{(1) Small Step Error.} We first show that the shortcut model provides a good local approximation to the true dynamics when trained well at the smallest step size. In other words, under small flow-matching error and Lipschitz drift dynamics, the shortcut model has bounded error when running a single inference step with the smallest stepsize:
\begin{lemma}[Single-Step Error with Minimum Step-Size]\label{lem:h0-single-step-error}
  Let $h_0 = \frac{1}{M}$ be the smallest stepsize. If
  \begin{itemize}
  \item for all $t = 0, h_0, 2h_0, \ldots, 1-h_0$, $E_{z_t \sim P_t}\left[\|s(z_t,t,h_0) - v_t(z_t))\|_2^2\right] \le \epsilon_{\text{FM}}^2$,
  \item $v_t(x)$ is $L$-Lipschitz in both $t$ and $x$;
  \item for all $t = 0, h_0, 2h_0, \ldots, 1-h_0$, $E_{z_t \sim P_t}\left[\|v_t(z_t))\|_2^2\right] \le M_v^2$,
  \end{itemize}
  then
  \begin{equation}
    \EE_{z_t \sim P_t}\left[\|z_t + s(z_t, t, h_0)h_0 - F(z_t, t, t + h_0)\|_2^2\right] \le h_0^2\left(L_ve^{L_vh_0}h_0\left(M_v+1\right) + \epsilon_{\text{FM}}\right)^2.
  \end{equation}
\end{lemma}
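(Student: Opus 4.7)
The plan is to bound the single-step error by the triangle inequality, splitting it into a flow-matching piece and an Euler-discretization piece, then handling the latter via a Grönwall argument. Working in $L^2(P_t)$, I would write
\begin{align*}
\bigl\| z_t + s(z_t,t,h_0) h_0 - F(z_t,t,t+h_0)\bigr\|_{L^2}
&\le h_0 \bigl\| s(z_t,t,h_0) - v_t(z_t)\bigr\|_{L^2} \\
&\quad + \bigl\| z_t + v_t(z_t) h_0 - F(z_t,t,t+h_0)\bigr\|_{L^2},
\end{align*}
and then square at the end. The first term is at most $h_0 \epsilon_{\text{FM}}$ by the flow-matching assumption, so the work lies entirely in the second term.

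For the Euler-vs-flow piece I would use the integral form $F(z_t,t,t+h_0) = z_t + \int_t^{t+h_0} v_\tau(z_\tau)\,d\tau$, where $z_\tau$ is the exact flow trajectory launched from $z_t$ at time $t$. This gives
\[
z_t + v_t(z_t) h_0 - F(z_t,t,t+h_0) = -\int_t^{t+h_0}\bigl(v_\tau(z_\tau) - v_t(z_t)\bigr)\,d\tau.
\]
Using joint Lipschitzness of $v$, I bound $\|v_\tau(z_\tau) - v_t(z_t)\| \le L_v(|\tau - t| + \|z_\tau - z_t\|)$. Next I control $\|z_\tau - z_t\| \le \int_t^\tau \|v_s(z_s)\|\,ds$ by the linear-growth estimate $\|v_s(z_s)\| \le \|v_t(z_t)\| + L_v(|s-t| + \|z_s - z_t\|)$, and close it with Grönwall to obtain
\[
\|z_\tau - z_t\| \;\le\; \bigl(\|v_t(z_t)\| h_0 + L_v h_0^2/2\bigr)\,e^{L_v h_0}
\]
on $\tau \in [t, t+h_0]$. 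Substituting back and integrating over $[t, t+h_0]$ yields a pointwise bound of order $L_v h_0^2 e^{L_v h_0}(\|v_t(z_t)\| + 1)$ (after absorbing the quadratic Lipschitz-in-time term). Taking $L^2(P_t)$ norms and invoking $\EE[\|v_t(z_t)\|^2] \le M_v^2$ turns this into $h_0^2 L_v e^{L_v h_0}(M_v + 1)$.

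Combining the two pieces by the triangle inequality in $L^2(P_t)$ gives
\[
\bigl\| z_t + s(z_t,t,h_0) h_0 - F(z_t,t,t+h_0)\bigr\|_{L^2} \;\le\; h_0\bigl(L_v e^{L_v h_0} h_0 (M_v+1) + \epsilon_{\text{FM}}\bigr),
\]
and squaring produces the claimed bound. The main obstacle I expect is bookkeeping in the Grönwall step: I need to ensure that the higher-order $L_v^2 h_0^2$ contribution (coming from the Lipschitz-in-time component of $v$) is cleanly absorbed into the $(M_v + 1)$ factor, and that the $L^2$ manipulation uses the second-moment assumption rather than a pointwise bound on $\|v_t\|$. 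A secondary subtlety is reconciling the lemma's use of $L$ and $L_v$ for the drift's Lipschitz constant—I would proceed uniformly with $L_v$ as in the theorem hypothesis.
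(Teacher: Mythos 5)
Your proposal is correct and takes essentially the same route as the paper: the identical triangle-inequality split into a flow-matching term (bounded by $h_0\epsilon_{\text{FM}}$) and an Euler local-truncation term controlled by the Lipschitz properties of $v$ plus a Gr\"onwall argument, followed by the same $L^2$ aggregation via $\sqrt{\EE\left[(\|v_t(z_t)\|_2+1)^2\right]}\le M_v+1$. The only cosmetic difference is that the paper applies the Gr\"onwall/integrating-factor step directly to the deviation $\|z_\tau-\bar z_\tau\|_2$ between the true flow and the Euler line (splitting $\|v_\tau(z_\tau)-v_t(z_t)\|_2$ into three Lipschitz pieces), rather than to the trajectory displacement $\|z_\tau-z_t\|_2$; this sidesteps the higher-order absorption step you flag, which in your version requires $L_v h_0\le 1$ (implicit in the theorem's constants anyway).
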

\paragraph{(2) Larger Step Error.} We further demonstrate that the single inference step remains accurate even at coarser step sizes, thanks to the self-consistency constraint:
\begin{lemma}[Single-Step Error with Step Size $h$]\label{lem:h-single-step-error}
  Let $h_0 = \frac{1}{M}$ be the smallest stepsize. If
  \begin{itemize}
  \item for all $h'$ and $t = 0,h', 2h', \ldots, 1-h'$, $\EE\left[\|F^{(2h')}(z_t,t,t+2h') - F^{(h')}(z_t,t,t+2h')\|_2^2\right] \le 4h'^2\epsilon_{\text{SC}}^2$;
  \item for all $t = 0, h_0, 2h_0, \ldots, 1-h_0$, $\EE_{z_t \sim P_t}\left[\|z_t + s(z_t, t, h_0)h_0 - F(z_t, t, t + h_0)\|_2^2\right] \le h_0^2\epsilon^2$;
  \item for all $h'$ and $t = 0,h', 2h', \ldots, 1-h'$, $s(\cdot, t, h')$ is $L$-Lipschitz,
  \end{itemize}
  then for $h = 2^n h_0$, and for all $t = 0, h, \ldots, 1-h$,
  \begin{equation}
    \sqrt{\EE\left[\|F^{(h')}(z_t,t,t+h') - F(z_t,t,t+h')\|_2^2\right]}
    \le
    h \exp\left(\frac{1}{2}L h\right)\left(\epsilon + n\epsilon_{\text{SC}}\right).
  \end{equation}
\end{lemma}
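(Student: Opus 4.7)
The plan is to construct an explicit coupling between the shortcut model's sample distribution $\hat p^{(h)}$ and the data distribution $p^{\star}$ by driving both processes from the same Gaussian $z_0$, and then to upper bound $\text{W}_2(\hat p^{(h)}, p^{\star})$ by the expected squared Euclidean distance between the endpoint $\hat z_1^{(h)}$ produced by the shortcut Euler sampler with step size $h$ and the endpoint $z_1^{\star}$ of the true flow ODE. Because $\text{W}_2$ is the infimum over couplings, any synchronous coupling works; in particular $\text{W}_2^2(\hat p^{(h)}, p^{\star}) \le \EE_{z_0}[\|\hat z_1^{(h)} - z_1^{\star}\|_2^2]$. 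The remaining task is to decompose this global $L^2$ error into a per-step local error and then accumulate it with a discrete Gr\"onwall-type argument, exactly along the three steps already foreshadowed by Lemmas~\ref{lem:h0-single-step-error} and \ref{lem:h-single-step-error}.

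For the local contribution, I would first establish Lemma~\ref{lem:h0-single-step-error} by a standard Taylor-type expansion: writing the true flow update as $F(z_t,t,t+h_0) = z_t + \int_t^{t+h_0} v_s(z_s)\, \d s$ and using $L_v$-Lipschitzness of $v$ plus the second moment bound $M_v$, the difference between $z_t + h_0 v_t(z_t)$ and $F(z_t,t,t+h_0)$ is $O(h_0^2)$ in $L^2$. Replacing $v_t(z_t)$ with $s(z_t,t,h_0)$ costs an additional $h_0 \epsilon_{\text{FM}}$ in $L^2$ by Assumption~\ref{assum:fm-cl-err}, giving the stated single-step bound. For Lemma~\ref{lem:h-single-step-error}, I would induct on $n$ where $h = 2^n h_0$: the self-consistency hypothesis says a single step of size $2h'$ agrees with the composition of two steps of size $h'$ up to an $L^2$ error of $2h'\epsilon_{\text{SC}}$, and the $L$-Lipschitz property of $s(\cdot,t,h')$ lets the error from the first half-step propagate through the second with multiplicative blow-up at most $(1+Lh')$, which exponentiates to the $\exp(\tfrac{1}{2}Lh)$ prefactor once unrolled along the doubling sequence $h_0, 2h_0, 4h_0, \ldots, h$ and yields the additive accumulation of $n\epsilon_{\text{SC}}$ on top of the base error $\epsilon$ from step one.

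To chain the $1/h$ consecutive Euler steps of the shortcut sampler into a global bound, I would apply a discrete Gr\"onwall inequality: at iteration $k$, the accumulated error satisfies $e_{k+1} \le (1+Lh) e_k + h \exp(\tfrac{1}{2}Lh)(\epsilon + n\epsilon_{\text{SC}})$, where the multiplicative factor $(1+Lh)$ comes from $L$-Lipschitzness of $s(\cdot,t,h)$ transporting the previous error forward and the additive term is the per-step error from Lemma~\ref{lem:h-single-step-error}. Summing the resulting geometric series over the $1/h$ Euler iterations produces the prefactor $\tfrac{1}{L}\bigl((1+Lh)^{1/h}-1\bigr)$. Substituting the base error $\epsilon = eL_v(M_v+1)/M + \epsilon_{\text{FM}}$ and $n = \log_2(Mh)$ yields Theorem~\ref{thm:shortcut-conv-explicit-bound}, and the cleaner form in Theorem~\ref{thm:shortcut-conv} follows from $(1+Lh)^{1/h} \le e^{L}$ whenever $Lh \le 1$ (guaranteed by $L/M < 1$).

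The main obstacle will be Lemma~\ref{lem:h-single-step-error}: the self-consistency telescoping is delicate because doubling the step size changes both the endpoint being compared and the intermediate midpoint $z'_{t+h}$ used in the self-consistency target, so one must carefully track whether each intermediate point lies on the shortcut trajectory or on the ``two half-steps'' trajectory and invoke $L$-Lipschitzness of $s$ only at states where the inductive hypothesis applies. This is what keeps the error linear in $n = \log_2(Mh)$ rather than exponential, and it is the essential qualitative gain of including the self-consistency loss.
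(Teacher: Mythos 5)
Your overall strategy for this lemma is the same as the paper's: define the one-step error $\Delta_{h'}$ at each dyadic scale, use the self-consistency hypothesis to replace one step of size $2h'$ by two steps of size $h'$ at a cost of $2h'\epsilon_{\text{SC}}$, use $L$-Lipschitzness of $s(\cdot,t,h')$ to propagate the first half-step's error through the second, and unroll the resulting recursion over the $n$ doubling levels. The delicacy you flag at the end (tracking whether the intermediate point $\hat z_{t+h'}$ lies on the shortcut trajectory or the true flow) is exactly the point the paper handles by inserting $F^{(h')}(z_{t+h'},t+h',t+2h')$ as an intermediate comparison term.

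However, the recursion you describe is missing a term, and as written it is not a valid inequality. After splitting off the self-consistency error, the two-half-steps-versus-true-flow discrepancy decomposes into \emph{two} pieces: the first half-step's error transported through the second step, which costs $(1+Lh')\Delta_{h'}$ as you say, \emph{plus} the second half-step's own fresh one-step error, which contributes another $\Delta_{h'}$ (it is exactly a one-step error at scale $h'$ started from the true marginal $z_{t+h'}\sim p_{t+h'}$). The correct recursion is therefore $\Delta_{2h'} \le 2h'\epsilon_{\text{SC}} + (2+Lh')\Delta_{h'}$, not $\Delta_{2h'}\le 2h'\epsilon_{\text{SC}}+(1+Lh')\Delta_{h'}$. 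The factor $2+Lh' = 2(1+\tfrac12 Lh')$ is essential to the arithmetic: the leading $2$ per level produces the $2^n = h/h_0$ amplification that turns the base error $A_0\le h_0\epsilon$ into $h\epsilon$, while the $(1+\tfrac12 Lh_0 2^j)$ factors are what exponentiate to $\exp(\tfrac12 Lh)$ (note the $\tfrac12$, which your mechanism of ``$(1+Lh')$ exponentiating'' does not explain). Equivalently, normalizing $B_k := \Delta_{2^k h_0}/(2^k h_0)$ gives $B_{k+1}\le (1+\tfrac12 Lh_0 2^k)B_k + \epsilon_{\text{SC}}$, which is the clean statement of why the \emph{relative} error accumulates only $n\epsilon_{\text{SC}}$. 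Without the dropped $\Delta_{h'}$ term your recursion would yield a bound of order $h_0\epsilon$ rather than $h\epsilon$, which is too strong to be derivable and signals that the inequality it rests on is false. (Separately, the Wasserstein coupling and the discrete Gr\"onwall chaining over the $1/h$ sampler iterations in your first and third paragraphs belong to the theorem and to Lemma~\ref{lem:multi-step-error}, not to this lemma; that part of your outline is consistent with the paper.)
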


\paragraph{(3) Composed Error Over Multiple Steps.} Finally, even with repeated applications, the shortcut model’s errors remain controlled, showing stability over long horizons. In other words, if the single inference step error is small and $s$ is $L$-Lipschitz, then the multi-step trajectory error is bounded:
\begin{lemma}[Error of $1/h$-Step Inference]\label{lem:multi-step-error}
  If $\EE_{z_t \sim P_t}\left[\|z_t + s(z_t, t, h)h - F(z_t, t, t + h)\|_2^2\right] \le h^2\epsilon^2$ for all $t = 0, h, 2h, \ldots, 1$, and $s(\cdot, t, h)$ is $L$-Lipschitz, then
  \begin{equation}
    \sqrt{\EE\left[\|\hat z_1^{(h)} - z_1\|_2^2\right]} \le \left((1+Lh)^{\frac{1}{h}}-1\right)\frac{\epsilon}{L}.
  \end{equation}
\end{lemma}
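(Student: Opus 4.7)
The plan is a discrete Grönwall-type argument in $L^2(\Omega)$: decompose the one-step discrepancy into a ``propagated old error'' part plus a ``fresh residual'' part, apply Lipschitz-ness of $s$ and the single-step hypothesis to each, and solve the resulting scalar linear recursion in closed form. Concretely, let $\hat z^{(h)}_{kh}$ denote the shortcut iterate, defined by $\hat z^{(h)}_0 = z_0 \sim \Ncal(0,I)$ and $\hat z^{(h)}_{(k+1)h} = \hat z^{(h)}_{kh} + s(\hat z^{(h)}_{kh}, kh, h)\,h$, while $z_{kh} := F(z_0, 0, kh)$ is the true ODE solution from the \emph{same} initial sample. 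Set $e_k := \hat z^{(h)}_{kh} - z_{kh}$ and the oracle single-step residual $\delta_k := z_{kh} + s(z_{kh}, kh, h)\,h - F(z_{kh}, kh, (k{+}1)h)$; since $z_{kh} \sim p_{kh}$ by the flow property, the lemma's hypothesis gives $\EE\|\delta_k\|_2^2 \le h^2\epsilon^2$.

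The key algebraic step adds and subtracts $s(z_{kh}, kh, h)\,h$ to obtain
\begin{equation*}
e_{k+1} \;=\; e_k \;+\; \bigl(s(\hat z^{(h)}_{kh}, kh, h) - s(z_{kh}, kh, h)\bigr)\,h \;+\; \delta_k.
\end{equation*}
Taking the $L^2(\Omega)$ norm, applying the Minkowski/triangle inequality term by term, and invoking the $L$-Lipschitz property of $s(\cdot, kh, h)$ \emph{pointwise} before taking expectations bounds the middle term by $Lh\,\|e_k\|_{L^2}$ and the third by $h\epsilon$, yielding the scalar recursion $\|e_{k+1}\|_{L^2} \le (1+Lh)\,\|e_k\|_{L^2} + h\epsilon$ with $\|e_0\|_{L^2} = 0$.

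Unrolling this linear recursion in closed form,
\begin{equation*}
\|e_k\|_{L^2} \;\le\; h\epsilon \sum_{j=0}^{k-1}(1+Lh)^j \;=\; \frac{(1+Lh)^k - 1}{L}\,\epsilon,
\end{equation*}
and specializing to $k = 1/h$ recovers the claimed bound on $\sqrt{\EE\|\hat z^{(h)}_1 - z_1\|_2^2}$. The one mildly subtle obstacle is that the single-step hypothesis is stated under the marginal $z_t \sim p_t$, while the error propagation couples the shortcut and true trajectories through the shared noise $z_0$; the cleanest resolution is to work on a single probability space on which both trajectories live (they already do, via $z_0$), observe that $\|\cdot\|_{L^2(\Omega;\RR^d)}$ is a genuine norm so the triangle inequality applies to random vectors, and note that the true flow transports $z_0 \sim \Ncal(0,I)$ exactly to $z_{kh} \sim p_{kh}$, so the stated bound on $\EE\|\delta_k\|_2^2$ indeed applies at every iterate $k$.
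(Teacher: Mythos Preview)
Your proposal is correct and follows essentially the same approach as the paper: both decompose the one-step error via the triangle inequality in $L^2(\Omega)$ into a propagated-error term bounded by $(1+Lh)\|e_k\|_{L^2}$ (via Lipschitzness of $s$) plus a fresh residual bounded by $h\epsilon$ (via the hypothesis applied at $z_{kh}\sim p_{kh}$), and then unroll the resulting scalar recursion from $\|e_0\|_{L^2}=0$. Your explicit remark about the coupling through the shared $z_0$ and the fact that the true flow transports $z_0$ to the correct marginal $p_{kh}$ is a useful clarification that the paper leaves implicit.
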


\paragraph{Summary: Aggregation of Error Bounds.} Given these insights, we finish the proof of the main theorem by applying the lemmas in order.

\begin{proof}[Proof of Theorem \ref{thm:shortcut-conv} / \ref{thm:shortcut-conv-explicit-bound}]
Let $h_0 = \frac{1}{M}$ be the smallest step size. Suppose $h = 2^nh_0$.

By Lemma~\ref{lem:h0-single-step-error},
  \begin{equation}
    \EE_{z_t \sim P_t}\left[\|z_t + s(z_t, t, h_0)h_0 - F(z_t, t, t + h_0)\|_2^2\right] \le h_0^2\left(L_ve^{L_vh_0}h_0\left(M_v+1\right) + \epsilon_{\text{FM}}\right)^2.
  \end{equation}
By Lemma~\ref{lem:h-single-step-error},
\begin{equation}
    \sqrt{\EE\left[\|F^{(h)}(z_t,t,t+h) - F(z_t,t,t+h)\|_2^2\right]}
    \le
    h \exp\left(\frac{1}{2}L h\right)\left(L_ve^{L_vh_0}h_0\left(M_v+1\right) + \epsilon_{\text{FM}} + n\epsilon_{\text{SC}}\right).
  \end{equation}
By Lemma~\ref{lem:multi-step-error},
\begin{align}
    \sqrt{\EE\left[\|\hat z_1^{(h)} - z_1\|_2^2\right]} \le \frac{1}{L}\left((1+Lh)^{\frac{1}{h}}-1\right)\exp\left(\frac{1}{2}L h\right)\left(L_ve^{L_vh_0}h_0\left(M_v+1\right) + \epsilon_{\text{FM}} + n\epsilon_{\text{SC}}\right)
  \end{align}
\end{proof}

\newpage
\subsection{Proof of Lemma \ref{lem:h0-single-step-error}}

\paragraph{Proof Sketch.}  The proof's strategy is:
\begin{enumerate}
    \item Compare the true dynamics $z_t$ with linear approximation $\bar z_t$ using ODE analysis.
    \item Bound the deviation over time.
    \item Add the flow-matching loss to account for model error.
\end{enumerate}

\begin{proof}
  Consider $t \in [t', t'+h_0]$.
  Let
  \begin{align}
    z_{t} :=  & F(z_{t'}, t', t) \\
    \bar z_{t} := & z_{t'} + v_{t'}(z_{t'}) (t-t'). 
  \end{align}
  By definition,
  \begin{align}
    \frac{\d z_t}{\d t} = & v_t(z_t)\\
    \frac{\d \bar z_t}{\d t} = & v_{t'}(z_{t'})
  \end{align}
  Then
  \begin{align}\label{eq:mse-ode-1}
    \frac{\d}{\d t}\|z_t - \bar z_t\|_2^2 
    &=  2\left\langle z_t - \bar z_t, \frac{\d}{\d t}z_t - \frac{\d}{\d t}\bar z_t\right\rangle \\
    &=  2\left\langle z_t - \bar z_t, v_t(z_t) - v_{t'}(z_{t'}) \right\rangle \\
    &\le  2\|z_t - \bar z_t\|_2 \|v_t(z_t) - v_{t'}(z_{t'})\|_2 
  \end{align}
  On the other hand, by chain rule:
  \begin{equation}
    \label{eq:mse-ode-2}
    \frac{\d}{\d t}\|z_t - \bar z_t\|_2^2
    = 2\|z_t - \bar z_t\|_2\frac{\d}{\d t}\|z_t - \bar z_t\|_2
  \end{equation}
  By Equations~\ref{eq:mse-ode-1} and~\ref{eq:mse-ode-2}:
  \begin{align}
    \frac{\d}{\d t}\|z_t - \bar z_t\|_2 \le &  \|v_t(z_t) - v_{t'}(z_{t'})\|_2
  \end{align}
  By triangle inequality:
  \begin{align}
    \|v_t(z_t) - v_{t'}(z_{t'})\|_2
    \le & \|v_t(z_t) - v_{t}(\bar z_{t})\|_2 + \|v_{t}(\bar z_{t}) - v_{t}(z_{t'})\|_2 + \|v_{t}(z_{t'}) - v_{t'}(z_{t'})\|_2
  \end{align}  
  Because $v_t(x)$ is $L_v$-Lipschitz in $x$,
  \begin{equation}
    \|v_t(z_t) - v_{t}(\bar z_{t})\|_2 \le L_v \|z_t - \bar z_t\|_2.
  \end{equation}
  Because $\bar z_t = z_{t'} + v_{t'}(z_{t'})(t - t')$ and $v_t(x)$ is $L_v$-Lipschitz in $x$,
  \begin{align}
    \|v_{t}(\bar z_{t}) - v_{t}(z_{t'})\|_2 
    &=  \|v_{t}(z_{t'} + v_{t'}(z_{t'})(t - t')) - v_{t}(z_{t'})\|_2 \\
    &\le L_v\|v_{t'}(z_{t'})\|_2(t - t') \\
    &\le L_v \|v_{t'}(z_{t'})\|_2h_0.
  \end{align}
  Because $v_t(x)$ is $L_v$-Lipschitz in $t$,
  \begin{align}
    \|v_{t}(z_{t'}) - v_{t'}(z_{t'})\|_2 \le L_v(t-t')\le L_v h_0.
  \end{align}
  Thus
  \begin{align}
    \frac{\d}{\d t}\|z_t - \bar z_t\|_2 \le &  L_v \|z_t - \bar z_t\|_2 + L_v(\|v_{t'}(z_{t'})\|_2+1)h_0
  \end{align}  
  Multiplying $e^{-L_v(t-t')}$ on both side, we get:
  \begin{align}
    e^{-L_v(t-t')} \frac{\d}{\d t}\|z_t - \bar z_t\|_2 \le e^{-L_v(t-t')} L_v\|z_t - \bar z_t\|_2  +  e^{-L_v(t-t')}L_v(\|v_{t'}(z_{t'})\|_2+1)h_0
  \end{align}
  By rearranging:
  \begin{align}
    \frac{\d}{\d t}\left(e^{-L_v(t-t')} \|z_t - \bar z_t\|_2\right) 
    &\le e^{-L_v(t-t')}L_v(\|v_{t'}(z_{t'})\|_2+1)h_0 \\
    &\le L_v(\|v_{t'}(z_{t'})\|_2+1)h_0.
  \end{align}
  Because $z_{t'} = \bar z_{t'}$, by integrating both side over $t \in [t', t'+h_0]$ , we have:
  \begin{align}
    e^{-L_vh_0} \|z_{t' + h_0} - \bar z_{t'+h_0}\|_2 \le L_v(\|v_{t'}(z_{t'})\|_2+1)h_0^2.
  \end{align}
  Thus
  \begin{align}
    \|z_{t' + h_0} - \bar z_{t'+h_0}\|_2 \le &  L_v(\|v_{t'}(z_{t'})\|_2+1)e^{L_vh_0}h_0^2.
  \end{align}
  Taking square and expectation on both sides, we have:
  \begin{align}
    \EE\left[\|z_{t' + h_0} - \bar z_{t'+h_0}\|_2^2\right] \le &  L_v^2e^{2L_vh_0}h_0^4\EE\left[(\|v_{t'}(z_{t'})\|_2+1)^2\right].
  \end{align}
  Thus
  \begin{align}
    \sqrt{\EE\left[\|z_{t' + h_0} - \bar z_{t'+h_0}\|_2^2\right]} 
    & \le  L_ve^{L_vh_0}h_0^2\sqrt{\EE\left[(\|v_{t'}(z_{t'})\|_2+1)^2\right]} \\
    & \le L_ve^{L_vh_0}h_0^2\left(\sqrt{\EE\left[\|v_{t'}(z_{t'})\|_2^2\right]}+1\right)\\
    & \le L_ve^{L_vh_0}h_0^2\left(M_v+1\right)
  \end{align}  
  By definition,
  \begin{align}
    \|\bar z_{t' + h_0} - \hat z_{t'+h_0}\|_2 
    & = \left\|z_{t'} + v_{t'}(z_{t'}) h_0 - (z_{t'} + s(z_{t'},t',h_0)h_0)\right\|_2 \\
    & = \left\|v_{t'}(z_{t'}) - s(z_{t'},t',h_0)\right\|_2h_0
  \end{align}
  Taking square and expectation on both sides, we have
  \begin{align}
    \EE\left[\|\bar z_{t' + h_0} - \hat z_{t'+h_0}\|_2^2\right] = & h_0^2\EE\left[\left\|v_{t'}(z_{t'}) - s(z_{t'},t',h_0)\right\|_2^2\right]
                                                                    \le h_0^2 \epsilon_{\text{FM}}^2.
  \end{align}  
  Thus
  \begin{align}
    \sqrt{\EE_{z_{t'}}\left[\|z_{t'+h_0}-\hat z_{t'+h_0}\|_2^2\right]} 
    & \le \sqrt{\EE_{z_{t'}}\left[\|z_{t'+h_0}-\bar z_{t'+h_0}\|_2^2\right]} + \sqrt{\EE_{z_{t'}}\left[\|\bar z_{t'+h_0}-\hat z_{t'+h_0}\|_2^2\right]}\\
    & \le Le^{L_vh_0}h_0^2\left(M_v+1\right) + h_0 \epsilon_{\text{FM}} \\
    & \le h_0\left(Le^{L_vh_0}h_0\left(M_v+1\right) + \epsilon_{\text{FM}}\right)
  \end{align}
\end{proof}

\newpage 
\subsection{Proof of Lemma \ref{lem:h-single-step-error}}

\paragraph{Proof Sketch.} The proof's strategy is:
\begin{enumerate}
    \item Define a recursive error relationship for step sizes $2^k h_0$.
    \item Use inductive bounding and logarithmic scaling of error growth.
    \item Solve the recursion to show controlled error amplification.
\end{enumerate}

\begin{proof}
  We define $\Delta_{h'}$ to be the maximum 1-step error induced by shortcut model with step size $h'$. Formally, for all $h'>0$ s.t. $1/h' \in \ZZ$, we define, 
  \begin{equation}
    \label{eq:1-step-sc-err}
    \Delta_{h'} := \max_{t \in \{0,h'\ldots,1-h'\}} \sqrt{\EE\left[\|F^{(h')}(z_t,t,t+h') - F(z_t,t,t+h')\|_2^2\right]}
  \end{equation}
  For all $h'$ and $t \in \{0,2h'\ldots,1-2h'\}$,
    \begin{align}
      &\sqrt{\EE\left[\left\|F^{(2h')}(z_t,t,t+2h') - F(z_t,t,t+2h')\right\|_2^2\right]} \nonumber\\
      \le & \sqrt{\EE\left[\left\|F^{(2h')}(z_t,t,t+2h') - F^{(h')}(z_t,t,t+2h')\right\|_2^2\right]} \nonumber\\
      & + \sqrt{\EE\left[\left\|F^{(h')}(z_t,t,t+2h') - F(z_t,t,t+2h')\right\|_2^2\right]}
    \end{align}
  By assumption, the first term is bounded by $2h'\epsilon_{\text{SC}}$.
  We now analyze the second term.
  Let $\hat z_{t+h'} := F^{(h')}(z_t,t,t+h')$, then
  \begin{align}
    & \sqrt{\EE\left[\|F^{(h')}(z_t,t,t+2h') - F(z_t,t,t+2h')\|_2^2\right]} \\
    = & \sqrt{\EE\left[\|F^{(h')}(\hat z_{t+h'},t+h',t+2h') - F(z_{t+h'},t+h',t+2h')\|_2^2\right]} \\
    \le & \sqrt{\EE\left[\|F^{(h')}(\hat z_{t+h'},t+h',t+2h') - F^{(h')}(z_{t+h'},t+h',t+2h')\|_2^2\right]} \\
    & + \sqrt{\EE\left[\|F^{(h')}(z_{t+h'},t+h',t+2h') - F(z_{t+h'},t+h',t+2h')\|_2^2\right]}.
  \end{align}
  By triangle inequality,
  \begin{align}
    & \|F^{(h')}(\hat z_{t+h'},t+h',t+2h') - F^{(h')}(z_{t+h'},t+h',t+2h')\|_2 \\
    = & \left\|(\hat z_{t+h'} + s(\hat z_{t+h'},t+h',t+2h')h') - (z_{t+h'} + s(z_{t+h'},t+h',t+2h')h')\right\|_2\\
    \le & \left\|\hat z_{t+h'} - z_{t+h'}\right\|_2 + h'\left\|s(\hat z_{t+h'},t+h',h') - s(z_{t+h'},t+h',h')\right\|_2
  \end{align}
  Because $s(\cdot,t,h')$ is $L$-Lipschitz (Assumption~\ref{assum:fm-cl-err}),
  \begin{align}
    & \|F^{(h')}(\hat z_{t+h'},t+h',t+2h') - F^{(h')}(z_{t+h'},t+h',t+2h')\|_2
      \le (1+Lh')\|\hat z_{t+h'} - z_{t+h'}\|_2
  \end{align}
  Thus
  \begin{align}
    & \sqrt{\EE\left[\|F^{(h')}(\hat z_{t+h'},t+h',t+2h') - F^{(h')}(z_{t+h'},t+h',t+2h')\|_2^2\right]} \\
    \le & (1+L h')\sqrt{\EE\left[\|\hat z_{t+h'} - z_{t+h'}\|_2^2\right]} \\
    = & (1+L h')\sqrt{\EE\left[\|F^{(h')}(z_t,t,t+h') - F(z_t,t,t+h')\|_2^2\right]} \le (1+Lh')\Delta_{h'}.
  \end{align}
  By Equation~\ref{eq:1-step-sc-err},
  \begin{align}
    \sqrt{\EE\left[\|F^{(h')}(z_{t+h'},t+h',t+2h') - F(z_{t+h'},t+h',t+2h')\|_2^2\right]} \le \Delta_{h'}.
  \end{align}
  Combine everything together,
  \begin{align}
    & \sqrt{\EE\left[\|F^{(2h')}(z_t,t,t+2h') - F(z_t,t,t+2h')\|_2^2\right]} \\
    \le & 2h'\epsilon_{\text{SC}} + (1+Lh')\Delta_{h'} + \Delta_{h'} = 2h'\epsilon_{\text{SC}} + (2+Lh')\Delta_{h'}.
  \end{align}
  Because $t'$ is chosen arbitrarily, we have
  \begin{align}
    \Delta_{2h'}
    \le & 2h'\epsilon_{\text{SC}} + (2+Lh')\Delta_{h'}.
  \end{align}
  Let $h' = 2^k h_0$ and $A_k := \Delta_{2^k h_0}$, we have:
  \begin{align}
    A_{k+1} \le (2+L h_0 2^k) A_k + 2\epsilon_{\text{SC}} h_0 2^{k}.
  \end{align}
  Solving this recursion, we have:
  \begin{align}
    A_n \le & A_0\prod_{j=0}^{n-1}(2+Lh_02^j) +  2\epsilon_{\text{SC}}h_0\sum_{k=0}^{n-1}2^{k}\prod_{j=k+1}^{n-1}(2+Lh_02^j)\\
    = & A_0 2^{n} \prod_{j=0}^{n-1}\left(1+\frac{1}{2}Lh_02^j\right) +  2\epsilon_{\text{SC}}h_0\sum_{k=0}^{n-1}2^{k}2^{n-k-1}\prod_{j=k+1}^{n-1}\left(1+\frac{1}{2}Lh_02^j\right)\\
    = & A_0 2^{n} \prod_{j=0}^{n-1}\left(1+\frac{1}{2}Lh_02^j\right) +  2^n\epsilon_{\text{SC}}h_0\sum_{k=0}^{n-1}\prod_{j=k+1}^{n-1}\left(1+\frac{1}{2}Lh_02^j\right).
  \end{align}
  Because $\ln(1+x) \le x$ for all $x> -1$, we have
  \begin{align}
    A_n \le 
    & A_0 2^{n} \exp\left(\sum_{j=0}^{n-1}\ln\left(1+\frac{1}{2}Lh_02^j\right)\right) \\
    & + 2^n\epsilon_{\text{SC}}h_0\sum_{k=0}^{n-1}\exp\left(\sum_{j=k+1}^{n-1}\ln\left(1+\frac{1}{2}Lh_02^j\right)\right) \\
    \le & A_0 2^{n} \exp\left(\frac{1}{2}Lh_0\sum_{j=0}^{n-1}2^j\right) +  2^n\epsilon_{\text{SC}}h_0\sum_{k=0}^{n-1}\exp\left(\frac{1}{2}Lh_0\sum_{j=k+1}^{n-1}2^j\right) \\
    = & A_0 2^{n} \exp\left(\frac{1}{2}Lh_0(2^n-1)\right) +  2^n\epsilon_{\text{SC}}h_0\sum_{k=0}^{n-1}\exp\left(\frac{1}{2}Lh_02^{k+1}(2^{n-k-1}-1)\right) \\
    \le & A_0 2^{n} \exp\left(\frac{1}{2}Lh_02^n\right) +  2^n\epsilon_{\text{SC}}h_0\sum_{k=0}^{n-1}\exp\left(\frac{1}{2}Lh_02^{n}\right) \\
    = & 2^n h_0 \exp\left(\frac{1}{2}Lh_02^n\right)\left(\frac{A_0}{h_0} + n\epsilon_{\text{SC}}\right) \\
    = & h \exp\left(\frac{1}{2}L h\right)\left(\frac{A_0}{h_0} + n\epsilon_{\text{SC}}\right)  
  \end{align}
  Because $\EE_{z_t \sim P_t}\left[\|z_t + s(z_t, t, h_0)h_0 - F(z_t, t, t + h_0)\|_2^2\right] \le h_0^2\epsilon^2$, we have $A_0 \le h_0\epsilon$. Thus
  \begin{align}
    \Delta_h = \Delta_{2^n h_0} = A_n \le h \exp\left(\frac{1}{2}L h\right)\left(\epsilon + n\epsilon_{\text{SC}}\right).
    \end{align}
\end{proof}

\newpage 
\subsection{Proof of Lemma \ref{lem:multi-step-error}}

\paragraph{Proof Sketch.}  The proof's strategy is:
\begin{enumerate}
    \item Define the recurrence over time steps for error accumulation.
    \item Use Lipschitz continuity to control error propagation.
    \item Solve the recurrence analytically.
\end{enumerate}

\begin{proof}
  \begin{align}
    \sqrt{\EE\left[\|\hat z_{t+h}^{(h)} - z_{t+h}\|_2^2\right]}
    \le & \sqrt{\EE\left[\|\hat z_t^{(h)} + s(\hat z_t^{(h)}, t, h)h - (z_t + s(z_t, t, h)h)\|_2^2\right]} \nonumber \\
    &+ \sqrt{\EE\left[\|(z_t + s(z_t, t, h)h) - z_{t+h}\|_2^2\right]} \\
    \le & (1+Lh) \sqrt{\EE\left[\|\hat z_{t}^{(h)} - z_{t}\|_2^2\right]} + h\epsilon
  \end{align}
  Because $\EE[\|\hat z_0^{(h)} - z_0\|_2^2] =0$, by solving the recursion,
  \begin{align}
    \sqrt{\EE\left[\|\hat z_1^{(h)} - z_1\|_2^2\right]} \le \left((1+Lh)^{\frac{1}{h}}-1\right)\frac{\epsilon}{L}
  \end{align}
\end{proof}

\newpage
\section{Full Results}
\label{appendix:full_results}

\vspace{3em}
\begin{table}[H]
    \vspace{-30pt}
    \caption{
      \textbf{\algo's Overall Performance By Task.}
      We present the full results on the OGBench tasks. \texttt{(*)} indicates the default task in each environment.
      The results are averaged over $8$ seeds with standard deviations reported. The baseline results are reported from \citet{park2025flow}'s extensive tuning and evaluation of baselines on OGBench tasks.
    }
    \label{table:offline_table_tasks}
    \centering
    \vspace{5pt}
    \resizebox{\textwidth}{!}{
    \begin{threeparttable}
    \begin{tabular}{lccccccccccc}
    \toprule
    \multicolumn{1}{c}{} & \multicolumn{3}{c}{\texttt{Gaussian}} & \multicolumn{3}{c}{\texttt{Diffusion}} & \multicolumn{4}{c}{\texttt{Flow}} & \multicolumn{1}{c}{\texttt{Shortcut}} \\
    \cmidrule(lr){2-4} \cmidrule(lr){5-7} \cmidrule(lr){8-11} \cmidrule(lr){12-12}
    \texttt{Task} & \texttt{BC} & \texttt{IQL} & \texttt{ReBRAC} & \texttt{IDQL} & \texttt{SRPO} & \texttt{CAC} & \texttt{FAWAC} & \texttt{FBRAC} & \texttt{IFQL} & \texttt{FQL} & \texttt{\color{pblue}\algo} \\
    \midrule
    
    \texttt{antmaze-large-navigate-singletask-task1-v0 (*)} & $0$ {\tiny $\pm 0$} & $48$ {\tiny $\pm 9$} & $\mathbf{91}$ {\tiny $\pm 10$} & $0$ {\tiny $\pm 0$} & $0$ {\tiny $\pm 0$} & $42$ {\tiny $\pm 7$} & $1$ {\tiny $\pm 1$} & $70$ {\tiny $\pm 20$} & $24$ {\tiny $\pm 17$} & $80$ {\tiny $\pm 8$} & $\textbf{93}$ {\tiny $\pm 2$} \\
    \texttt{antmaze-large-navigate-singletask-task2-v0} & $6$ {\tiny $\pm 3$} & $42$ {\tiny $\pm 6$} & $\mathbf{88}$ {\tiny $\pm 4$} & $14$ {\tiny $\pm 8$} & $4$ {\tiny $\pm 4$} & $1$ {\tiny $\pm 1$} & $0$ {\tiny $\pm 1$} & $35$ {\tiny $\pm 12$} & $8$ {\tiny $\pm 3$} & $57$ {\tiny $\pm 10$} & $79$ {\tiny $\pm 5$} \\
    \texttt{antmaze-large-navigate-singletask-task3-v0} & $29$ {\tiny $\pm 5$} & $72$ {\tiny $\pm 7$} & $51$ {\tiny $\pm 18$} & $26$ {\tiny $\pm 8$} & $3$ {\tiny $\pm 2$} & $49$ {\tiny $\pm 10$} & $12$ {\tiny $\pm 4$} & $83$ {\tiny $\pm 15$} & $52$ {\tiny $\pm 17$} & $\mathbf{93}$ {\tiny $\pm 3$} & $\mathbf{88}$ {\tiny $\pm 10$} \\
    \texttt{antmaze-large-navigate-singletask-task4-v0} & $8$ {\tiny $\pm 3$} & $51$ {\tiny $\pm 9$} & $84$ {\tiny $\pm 7$} & $62$ {\tiny $\pm 25$} & $45$ {\tiny $\pm 19$} & $17$ {\tiny $\pm 6$} & $10$ {\tiny $\pm 3$} & $37$ {\tiny $\pm 18$} & $18$ {\tiny $\pm 8$} & $80$ {\tiny $\pm 4$} & $\mathbf{91}$ {\tiny $\pm 2$} \\
    \texttt{antmaze-large-navigate-singletask-task5-v0} & $10$ {\tiny $\pm 3$} & $54$ {\tiny $\pm 22$} & $\mathbf{90}$ {\tiny $\pm 2$} & $2$ {\tiny $\pm 2$} & $1$ {\tiny $\pm 1$} & $55$ {\tiny $\pm 6$} & $9$ {\tiny $\pm 5$} & $76$ {\tiny $\pm 8$} & $38$ {\tiny $\pm 18$} & $83$ {\tiny $\pm 4$} & $\mathbf{95}$ {\tiny $\pm 0$} \\
    \midrule
    \texttt{antmaze-giant-navigate-singletask-task1-v0 (*)} & $0$ {\tiny $\pm 0$} & $0$ {\tiny $\pm 0$} & $\mathbf{27}$ {\tiny $\pm 22$} & $0$ {\tiny $\pm 0$} & $0$ {\tiny $\pm 0$} & $0$ {\tiny $\pm 0$} & $0$ {\tiny $\pm 0$} & $0$ {\tiny $\pm 1$} & $0$ {\tiny $\pm 0$} & $4$ {\tiny $\pm 5$} & $12$ {\tiny $\pm 6$} \\
    \texttt{antmaze-giant-navigate-singletask-task2-v0} & $0$ {\tiny $\pm 0$} & $1$ {\tiny $\pm 1$} & $\mathbf{16}$ {\tiny $\pm 17$} & $0$ {\tiny $\pm 0$} & $0$ {\tiny $\pm 0$} & $0$ {\tiny $\pm 0$} & $0$ {\tiny $\pm 0$} & $4$ {\tiny $\pm 7$} & $0$ {\tiny $\pm 0$} & $9$ {\tiny $\pm 7$} & $0$ {\tiny $\pm 0$} \\
    \texttt{antmaze-giant-navigate-singletask-task3-v0} & $0$ {\tiny $\pm 0$} & $0$ {\tiny $\pm 0$} & $\mathbf{34}$ {\tiny $\pm 22$} & $0$ {\tiny $\pm 0$} & $0$ {\tiny $\pm 0$} & $0$ {\tiny $\pm 0$} & $0$ {\tiny $\pm 0$} & $0$ {\tiny $\pm 0$} & $0$ {\tiny $\pm 0$} & $0$ {\tiny $\pm 1$} & $0$ {\tiny $\pm 0$} \\
    \texttt{antmaze-giant-navigate-singletask-task4-v0} & $0$ {\tiny $\pm 0$} & $0$ {\tiny $\pm 0$} & $5$ {\tiny $\pm 12$} & $0$ {\tiny $\pm 0$} & $0$ {\tiny $\pm 0$} & $0$ {\tiny $\pm 0$} & $0$ {\tiny $\pm 0$} & $9$ {\tiny $\pm 4$} & $0$ {\tiny $\pm 0$} & $14$ {\tiny $\pm 23$} & $\mathbf{25}$ {\tiny $\pm 18$} \\
    \texttt{antmaze-giant-navigate-singletask-task5-v0} & $1$ {\tiny $\pm 1$} & $19$ {\tiny $\pm 7$} & $\mathbf{49}$ {\tiny $\pm 22$} & $0$ {\tiny $\pm 1$} & $0$ {\tiny $\pm 0$} & $0$ {\tiny $\pm 0$} & $0$ {\tiny $\pm 0$} & $6$ {\tiny $\pm 10$} & $13$ {\tiny $\pm 9$} & $16$ {\tiny $\pm 28$} & $6$ {\tiny $\pm 15$} \\
    \midrule
    \texttt{humanoidmaze-medium-navigate-singletask-task1-v0 (*)} & $1$ {\tiny $\pm 0$} & $32$ {\tiny $\pm 7$} & $16$ {\tiny $\pm 9$} & $1$ {\tiny $\pm 1$} & $0$ {\tiny $\pm 0$} & $38$ {\tiny $\pm 19$} & $6$ {\tiny $\pm 2$} & $25$ {\tiny $\pm 8$} & $\mathbf{69}$ {\tiny $\pm 19$} & $19$ {\tiny $\pm 12$} & $\mathbf{67}$ {\tiny $\pm 4$} \\
    \texttt{humanoidmaze-medium-navigate-singletask-task2-v0} & $1$ {\tiny $\pm 0$} & $41$ {\tiny $\pm 9$} & $18$ {\tiny $\pm 16$} & $1$ {\tiny $\pm 1$} & $1$ {\tiny $\pm 1$} & $47$ {\tiny $\pm 35$} & $40$ {\tiny $\pm 2$} & $76$ {\tiny $\pm 10$} & $85$ {\tiny $\pm 11$} & $\mathbf{94}$ {\tiny $\pm 3$} & $\mathbf{89}$ {\tiny $\pm 3$} \\
    \texttt{humanoidmaze-medium-navigate-singletask-task3-v0} & $6$ {\tiny $\pm 2$} & $25$ {\tiny $\pm 5$} & $36$ {\tiny $\pm 13$} & $0$ {\tiny $\pm 1$} & $2$ {\tiny $\pm 1$} & $\mathbf{83}$ {\tiny $\pm 18$} & $19$ {\tiny $\pm 2$} & $27$ {\tiny $\pm 11$} & $49$ {\tiny $\pm 49$} & $74$ {\tiny $\pm 18$} & $\mathbf{83}$ {\tiny $\pm 4$} \\
    \texttt{humanoidmaze-medium-navigate-singletask-task4-v0} & $0$ {\tiny $\pm 0$} & $0$ {\tiny $\pm 1$} & $\mathbf{15}$ {\tiny $\pm 16$} & $1$ {\tiny $\pm 1$} & $1$ {\tiny $\pm 1$} & $5$ {\tiny $\pm 4$} & $1$ {\tiny $\pm 1$} & $1$ {\tiny $\pm 2$} & $1$ {\tiny $\pm 1$} & $3$ {\tiny $\pm 4$} & $1$ {\tiny $\pm 0$} \\
    \texttt{humanoidmaze-medium-navigate-singletask-task5-v0} & $2$ {\tiny $\pm 1$} & $66$ {\tiny $\pm 4$} & $24$ {\tiny $\pm 20$} & $1$ {\tiny $\pm 1$} & $3$ {\tiny $\pm 3$} & $91$ {\tiny $\pm 5$} & $31$ {\tiny $\pm 7$} & $63$ {\tiny $\pm 9$} & $\mathbf{98}$ {\tiny $\pm 2$} & $\mathbf{97}$ {\tiny $\pm 2$} & $81$ {\tiny $\pm 20$} \\
    \midrule
    \texttt{humanoidmaze-large-navigate-singletask-task1-v0 (*)} & $0$ {\tiny $\pm 0$} & $3$ {\tiny $\pm 1$} & $2$ {\tiny $\pm 1$} & $0$ {\tiny $\pm 0$} & $0$ {\tiny $\pm 0$} & $1$ {\tiny $\pm 1$} & $0$ {\tiny $\pm 0$} & $0$ {\tiny $\pm 1$} & $6$ {\tiny $\pm 2$} & $7$ {\tiny $\pm 6$} & $\mathbf{20}$ {\tiny $\pm 9$} \\
    \texttt{humanoidmaze-large-navigate-singletask-task2-v0} & $\mathbf{0}$ {\tiny $\pm 0$} & $\mathbf{0}$ {\tiny $\pm 0$} & $\mathbf{0}$ {\tiny $\pm 0$} & $\mathbf{0}$ {\tiny $\pm 0$} & $\mathbf{0}$ {\tiny $\pm 0$} & $\mathbf{0}$ {\tiny $\pm 0$} & $\mathbf{0}$ {\tiny $\pm 0$} & $\mathbf{0}$ {\tiny $\pm 0$} & $\mathbf{0}$ {\tiny $\pm 0$} & $\mathbf{0}$ {\tiny $\pm 0$} & $\mathbf{0}$ {\tiny $\pm 0$} \\
    \texttt{humanoidmaze-large-navigate-singletask-task3-v0} & $1$ {\tiny $\pm 1$} & $7$ {\tiny $\pm 3$} & $8$ {\tiny $\pm 4$} & $3$ {\tiny $\pm 1$} & $1$ {\tiny $\pm 1$} & $2$ {\tiny $\pm 3$} & $1$ {\tiny $\pm 1$} & $10$ {\tiny $\pm 2$} & $\mathbf{48}$ {\tiny $\pm 10$} & $11$ {\tiny $\pm 7$} & $5$ {\tiny $\pm 2$} \\
    \texttt{humanoidmaze-large-navigate-singletask-task4-v0} & $1$ {\tiny $\pm 0$} & $1$ {\tiny $\pm 0$} & $1$ {\tiny $\pm 1$} & $0$ {\tiny $\pm 0$} & $0$ {\tiny $\pm 0$} & $0$ {\tiny $\pm 1$} & $0$ {\tiny $\pm 0$} & $0$ {\tiny $\pm 0$} & $1$ {\tiny $\pm 1$} & $\mathbf{2}$ {\tiny $\pm 3$} & $0$ {\tiny $\pm 0$} \\
    \texttt{humanoidmaze-large-navigate-singletask-task5-v0} & $0$ {\tiny $\pm 1$} & $1$ {\tiny $\pm 1$} & $\mathbf{2}$ {\tiny $\pm 2$} & $0$ {\tiny $\pm 0$} & $0$ {\tiny $\pm 0$} & $0$ {\tiny $\pm 0$} & $0$ {\tiny $\pm 0$} & $1$ {\tiny $\pm 1$} & $0$ {\tiny $\pm 0$} & $1$ {\tiny $\pm 3$} & $0$ {\tiny $\pm 0$} \\
    \midrule
    \texttt{antsoccer-arena-navigate-singletask-task1-v0} & $2$ {\tiny $\pm 1$} & $14$ {\tiny $\pm 5$} & $0$ {\tiny $\pm 0$} & $44$ {\tiny $\pm 12$} & $2$ {\tiny $\pm 1$} & $1$ {\tiny $\pm 3$} & $22$ {\tiny $\pm 2$} & $17$ {\tiny $\pm 3$} & $61$ {\tiny $\pm 25$} & $77$ {\tiny $\pm 4$} & $\mathbf{93}$ {\tiny $\pm 4$} \\
    \texttt{antsoccer-arena-navigate-singletask-task2-v0} & $2$ {\tiny $\pm 2$} & $17$ {\tiny $\pm 7$} & $0$ {\tiny $\pm 1$} & $15$ {\tiny $\pm 12$} & $3$ {\tiny $\pm 1$} & $0$ {\tiny $\pm 0$} & $8$ {\tiny $\pm 1$} & $8$ {\tiny $\pm 2$} & $75$ {\tiny $\pm 3$} & $88$ {\tiny $\pm 3$} & $\mathbf{96}$ {\tiny $\pm 2$} \\
    \texttt{antsoccer-arena-navigate-singletask-task3-v0} & $0$ {\tiny $\pm 0$} & $6$ {\tiny $\pm 4$} & $0$ {\tiny $\pm 0$} & $0$ {\tiny $\pm 0$} & $0$ {\tiny $\pm 0$} & $8$ {\tiny $\pm 19$} & $11$ {\tiny $\pm 5$} & $16$ {\tiny $\pm 3$} & $14$ {\tiny $\pm 22$} & $\mathbf{61}$ {\tiny $\pm 6$} & $55$ {\tiny $\pm 6$} \\
    \texttt{antsoccer-arena-navigate-singletask-task4-v0 (*)} & $1$ {\tiny $\pm 0$} & $3$ {\tiny $\pm 2$} & $0$ {\tiny $\pm 0$} & $0$ {\tiny $\pm 1$} & $0$ {\tiny $\pm 0$} & $0$ {\tiny $\pm 0$} & $12$ {\tiny $\pm 3$} & $24$ {\tiny $\pm 4$} & $16$ {\tiny $\pm 9$} & $39$ {\tiny $\pm 6$} & $\mathbf{54}$ {\tiny $\pm 5$} \\
    \texttt{antsoccer-arena-navigate-singletask-task5-v0} & $0$ {\tiny $\pm 0$} & $2$ {\tiny $\pm 2$} & $0$ {\tiny $\pm 0$} & $0$ {\tiny $\pm 0$} & $0$ {\tiny $\pm 0$} & $0$ {\tiny $\pm 0$} & $9$ {\tiny $\pm 2$} & $15$ {\tiny $\pm 4$} & $0$ {\tiny $\pm 1$} & $36$ {\tiny $\pm 9$} & $\mathbf{47}$ {\tiny $\pm 9$} \\
    \midrule
    \texttt{cube-single-play-singletask-task1-v0} & $10$ {\tiny $\pm 5$} & $88$ {\tiny $\pm 3$} & $89$ {\tiny $\pm 5$} & $\mathbf{95}$ {\tiny $\pm 2$} & $89$ {\tiny $\pm 7$} & $77$ {\tiny $\pm 28$} & $81$ {\tiny $\pm 9$} & $73$ {\tiny $\pm 33$} & $79$ {\tiny $\pm 4$} & $\mathbf{97}$ {\tiny $\pm 2$} & $\mathbf{97}$ {\tiny $\pm 2$} \\
    \texttt{cube-single-play-singletask-task2-v0 (*)} & $3$ {\tiny $\pm 1$} & $85$ {\tiny $\pm 8$} & $92$ {\tiny $\pm 4$} & $\mathbf{96}$ {\tiny $\pm 2$} & $82$ {\tiny $\pm 16$} & $80$ {\tiny $\pm 30$} & $81$ {\tiny $\pm 9$} & $83$ {\tiny $\pm 13$} & $73$ {\tiny $\pm 3$} & $\mathbf{97}$ {\tiny $\pm 2$} & $\mathbf{99}$ {\tiny $\pm 0$} \\
    \texttt{cube-single-play-singletask-task3-v0} & $9$ {\tiny $\pm 3$} & $91$ {\tiny $\pm 5$} & $93$ {\tiny $\pm 3$} & $\mathbf{99}$ {\tiny $\pm 1$} & $\mathbf{96}$ {\tiny $\pm 2$} & $\mathbf{98}$ {\tiny $\pm 1$} & $87$ {\tiny $\pm 4$} & $82$ {\tiny $\pm 12$} & $88$ {\tiny $\pm 4$} & $\mathbf{98}$ {\tiny $\pm 2$} & $\mathbf{99}$ {\tiny $\pm 1$} \\
    \texttt{cube-single-play-singletask-task4-v0} & $2$ {\tiny $\pm 1$} & $73$ {\tiny $\pm 6$} & $\mathbf{92}$ {\tiny $\pm 3$} & $\mathbf{93}$ {\tiny $\pm 4$} & $70$ {\tiny $\pm 18$} & $\mathbf{91}$ {\tiny $\pm 2$} & $79$ {\tiny $\pm 6$} & $79$ {\tiny $\pm 20$} & $79$ {\tiny $\pm 6$} & $\mathbf{94}$ {\tiny $\pm 3$} & $\mathbf{95}$ {\tiny $\pm 2$} \\
    \texttt{cube-single-play-singletask-task5-v0} & $3$ {\tiny $\pm 3$} & $78$ {\tiny $\pm 9$} & $87$ {\tiny $\pm 8$} & $\mathbf{90}$ {\tiny $\pm 6$} & $61$ {\tiny $\pm 12$} & $80$ {\tiny $\pm 20$} & $78$ {\tiny $\pm 10$} & $76$ {\tiny $\pm 33$} & $77$ {\tiny $\pm 7$} & $\mathbf{93}$ {\tiny $\pm 3$} & $\mathbf{93}$ {\tiny $\pm 3$} \\
    \midrule
    \texttt{cube-double-play-singletask-task1-v0} & $8$ {\tiny $\pm 3$} & $27$ {\tiny $\pm 5$} & $45$ {\tiny $\pm 6$} & $39$ {\tiny $\pm 19$} & $7$ {\tiny $\pm 6$} & $21$ {\tiny $\pm 8$} & $21$ {\tiny $\pm 7$} & $47$ {\tiny $\pm 11$} & $35$ {\tiny $\pm 9$} & $61$ {\tiny $\pm 9$} & $\mathbf{77}$ {\tiny $\pm 11$} \\
    \texttt{cube-double-play-singletask-task2-v0 (*)} & $0$ {\tiny $\pm 0$} & $1$ {\tiny $\pm 1$} & $7$ {\tiny $\pm 3$} & $16$ {\tiny $\pm 10$} & $0$ {\tiny $\pm 0$} & $2$ {\tiny $\pm 2$} & $2$ {\tiny $\pm 1$} & $22$ {\tiny $\pm 12$} & $9$ {\tiny $\pm 5$} & $\mathbf{36}$ {\tiny $\pm 6$} & $33$ {\tiny $\pm 8$} \\
    \texttt{cube-double-play-singletask-task3-v0} & $0$ {\tiny $\pm 0$} & $0$ {\tiny $\pm 0$} & $4$ {\tiny $\pm 1$} & $17$ {\tiny $\pm 8$} & $0$ {\tiny $\pm 1$} & $3$ {\tiny $\pm 1$} & $1$ {\tiny $\pm 1$} & $4$ {\tiny $\pm 2$} & $8$ {\tiny $\pm 5$} & $\mathbf{22}$ {\tiny $\pm 5$} & $12$ {\tiny $\pm 6$} \\
    \texttt{cube-double-play-singletask-task4-v0} & $0$ {\tiny $\pm 0$} & $0$ {\tiny $\pm 0$} & $1$ {\tiny $\pm 1$} & $0$ {\tiny $\pm 1$} & $0$ {\tiny $\pm 0$} & $0$ {\tiny $\pm 1$} & $0$ {\tiny $\pm 0$} & $0$ {\tiny $\pm 1$} & $1$ {\tiny $\pm 1$} & $5$ {\tiny $\pm 2$} & $\mathbf{7}$ {\tiny $\pm 4$} \\
    \texttt{cube-double-play-singletask-task5-v0} & $0$ {\tiny $\pm 0$} & $4$ {\tiny $\pm 3$} & $4$ {\tiny $\pm 2$} & $1$ {\tiny $\pm 1$} & $0$ {\tiny $\pm 0$} & $3$ {\tiny $\pm 2$} & $2$ {\tiny $\pm 1$} & $2$ {\tiny $\pm 2$} & $17$ {\tiny $\pm 6$} & $\mathbf{19}$ {\tiny $\pm 10$} & $1$ {\tiny $\pm 1$} \\
    \midrule
    \texttt{scene-play-singletask-task1-v0} & $19$ {\tiny $\pm 6$} & $94$ {\tiny $\pm 3$} & $\mathbf{95}$ {\tiny $\pm 2$} & $\mathbf{100}$ {\tiny $\pm 0$} & $94$ {\tiny $\pm 4$} & $\mathbf{100}$ {\tiny $\pm 1$} & $87$ {\tiny $\pm 8$} & $\mathbf{96}$ {\tiny $\pm 8$} & $\mathbf{98}$ {\tiny $\pm 3$} & $\mathbf{100}$ {\tiny $\pm 0$} & $\mathbf{99}$ {\tiny $\pm 1$} \\
    \texttt{scene-play-singletask-task2-v0 (*)} & $1$ {\tiny $\pm 1$} & $12$ {\tiny $\pm 3$} & $50$ {\tiny $\pm 13$} & $33$ {\tiny $\pm 14$} & $2$ {\tiny $\pm 2$} & $50$ {\tiny $\pm 40$} & $18$ {\tiny $\pm 8$} & $46$ {\tiny $\pm 10$} & $0$ {\tiny $\pm 0$} & $76$ {\tiny $\pm 9$} & $\mathbf{89}$ {\tiny $\pm 9$} \\
    \texttt{scene-play-singletask-task3-v0} & $1$ {\tiny $\pm 1$} & $32$ {\tiny $\pm 7$} & $55$ {\tiny $\pm 16$} & $\mathbf{94}$ {\tiny $\pm 4$} & $4$ {\tiny $\pm 4$} & $49$ {\tiny $\pm 16$} & $38$ {\tiny $\pm 9$} & $78$ {\tiny $\pm 14$} & $54$ {\tiny $\pm 19$} & $\mathbf{98}$ {\tiny $\pm 1$} & $\mathbf{97}$ {\tiny $\pm 1$} \\
    \texttt{scene-play-singletask-task4-v0} & $2$ {\tiny $\pm 2$} & $0$ {\tiny $\pm 1$} & $3$ {\tiny $\pm 3$} & $4$ {\tiny $\pm 3$} & $0$ {\tiny $\pm 0$} & $0$ {\tiny $\pm 0$} & $\mathbf{6}$ {\tiny $\pm 1$} & $4$ {\tiny $\pm 4$} & $0$ {\tiny $\pm 0$} & $5$ {\tiny $\pm 1$} & $1$ {\tiny $\pm 1$} \\
    \texttt{scene-play-singletask-task5-v0} & $\mathbf{0}$ {\tiny $\pm 0$} & $\mathbf{0}$ {\tiny $\pm 0$} & $\mathbf{0}$ {\tiny $\pm 0$} & $\mathbf{0}$ {\tiny $\pm 0$} & $\mathbf{0}$ {\tiny $\pm 0$} & $\mathbf{0}$ {\tiny $\pm 0$} & $\mathbf{0}$ {\tiny $\pm 0$} & $\mathbf{0}$ {\tiny $\pm 0$} & $\mathbf{0}$ {\tiny $\pm 0$} & $\mathbf{0}$ {\tiny $\pm 0$} & $\mathbf{0}$ {\tiny $\pm 0$} \\
    
    \bottomrule
    \end{tabular}
    \end{threeparttable}
    }
    \vspace{-10pt}
    \end{table}

\vspace{1em}

\subsection{Per-Task Results}

\textbf{Q: What is \algo's overall performance on each task?}

\textit{\algo achieves the best performance on the majority of the diverse set of tasks considered.}

We present \algo's overall performance for each individual task in Table~\ref{table:offline_table_tasks}. Note that the only \algo training parameter that changes between environments is the \texttt{Q-loss coefficient}, which is common in offline RL \citep{tarasov2023corl,park2024value,park2025flow}. The \texttt{Q-loss coefficient} was tuned on the default task in each environment, and the same coefficient was used on all 5 tasks in each environment. The experimental setup was kept consistent to ensure a fair comparison between \algo and the baselines reported in \citet{park2025flow}. 

\clearpage

\subsection{Runtime Comparison}
\begin{figure}[t!]
  \centering
  \includegraphics[width=0.495\textwidth]{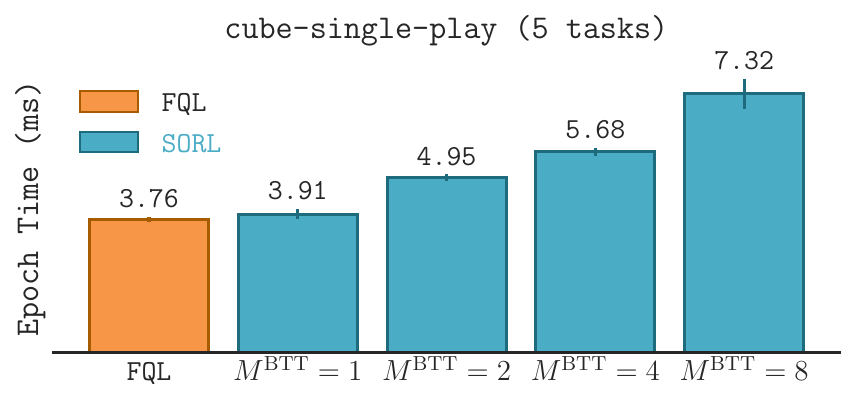}
  \includegraphics[width=0.495\textwidth]{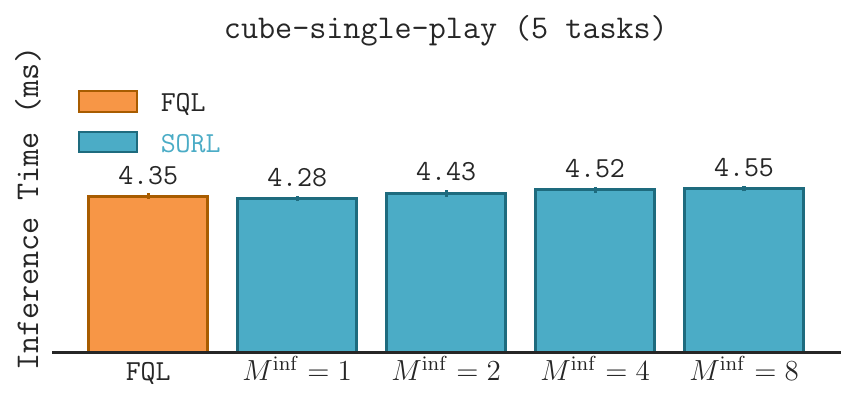}
  \caption{\textbf{Runtime Comparison.} We vary \algo's \textit{training-time compute budget} (i.e. the number of backpropagation steps through time $\dbtt$) on the \textit{left} and \algo's \textit{inference-time compute budget} (i.e. the number of inference steps $\dinf$) on the \textit{right}. The performance is averaged over 5 seeds for each task, with 5 tasks per environment, and standard deviations reported.
  }
  \label{fig:runtime}
\end{figure}

\textbf{Q: How does \algo's runtime compare to the fastest flow-based baseline, \fql?}

\textit{\algo's scalability enables training runtime to match that of \fql under low compute budgets, while exceeding it when larger budgets are allocated. During inference, \algo maintains runtime comparable to \fql.}

We present a runtime comparison between \algo and \texttt{FQL}. \texttt{FQL} is one of the fastest flow-based baselines we consider, for both training and inference \citep{park2025flow}, as \fql's policy is a one-step distillation model. We select \texttt{cube-single-play} because \algo and \fql achieve similar performance on the environment (Table~\ref{table:offline_table_envs}). The experiments were performed on a Nvidia RTX 3090 GPU. The epoch and inference times are averaged over the first three evaluations (i.e. the first training/evaluation, training/evaluation at 100,000 gradient steps, and training/evaluation at 200,000 gradient steps). For the inference-time plot, we use the maximum training-compute budget (i.e. $\dbtt=8$).

Figure~\ref{fig:runtime} demonstrates \algo's scalability during both training and inference. At training time, \algo's training runtime can be shortened by decreasing the number of backpropagation steps through time. At inference time, \algo maintains similar inference times compared to \fql under varying inference-time compute budgets.

\newpage
\section{Ablation Studies}
\label{appendix:ablations}

\begin{figure}[H]
  \centering
  \includegraphics[width=0.495\textwidth]{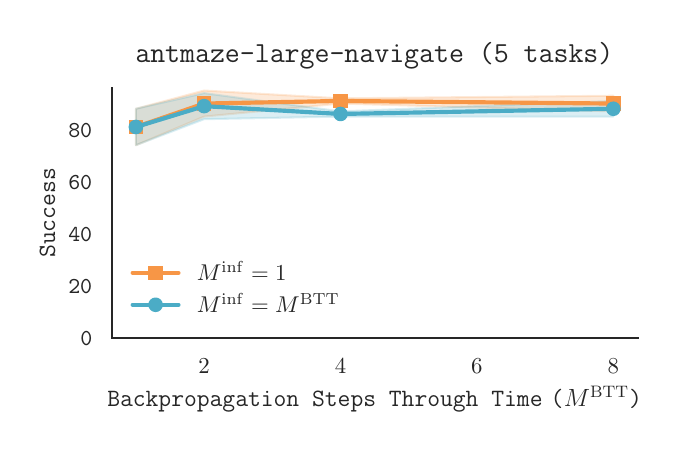}\\
  \includegraphics[width=0.495\textwidth]{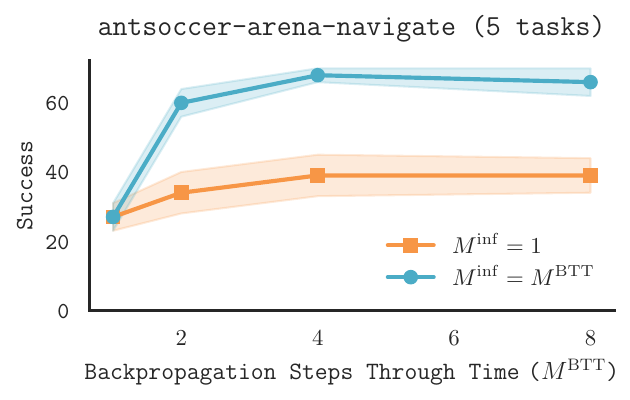}
  \includegraphics[width=0.495\textwidth]{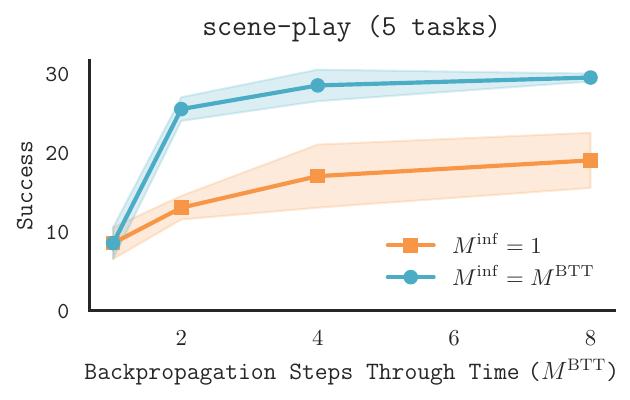}
  \caption{\textbf{Ablation Over Backpropagation Steps Through Time, $\dbtt$.} We investigate the effect of varying the training-time compute budget (i.e. the number of backpropagation steps through time $\dbtt$). The performance is averaged over 8 seeds for each task, with 5 tasks per environment, and standard deviations reported. We report results using one inference step (i.e. $\dinf=1$) and using the same number of inference steps as backpropagation steps through time (i.e. $\dinf=\dbtt$).
  }
  \label{fig:ablation_btt}
\end{figure}

\subsection{Backpropagation Steps Through Time, $\dbtt$}

\textbf{Q: How does increasing training-time compute affect performance?}

\textit{\algo generally observes increasing performance with increased training-time compute (i.e increasing~$\dbtt$), up to a performance saturation point of $\dbtt=4$.}

One of \algo's unique strengths is its scalability at both training-time and inference-time. In this experiment, we consider the question of how much training-time compute is necessary. Based on the results in Figure \ref{fig:ablation_btt}, we see that, in general, performance improves with increased training-time compute (i.e. with an increasing number of backpropagation steps through time, $\dbtt$). However, performance seems to saturate around $\dbtt=4$, suggesting that greater training-time compute may not be necessary for the environments considered. 

The results suggest two key takeaways. First, backpropagation through time over more than one step \textit{is} generally necessary in \algo to maximize performance. Insufficient training-time compute may lead to sub-optimal results, as evidenced in Figure~\ref{fig:ablation_btt} and Tables~\ref{table:offline_table_envs}and\ref{table:offline_table_tasks}, where \algo with $\dbtt \geq 4$ outperforms both the baselines and the versions of \algo with $\dbtt < 4$. Second, the results indicate that \algo does \textit{not} require backpropagation through tens or hundreds of steps---as is common in large diffusion models \citep{ho2020denoising,song2020denoising}---a procedure that could be computationally prohibitive.

\newpage
\subsection{Policy Network Size}
\begin{figure}[t!]
  \centering
  \includegraphics[width=0.60\textwidth]{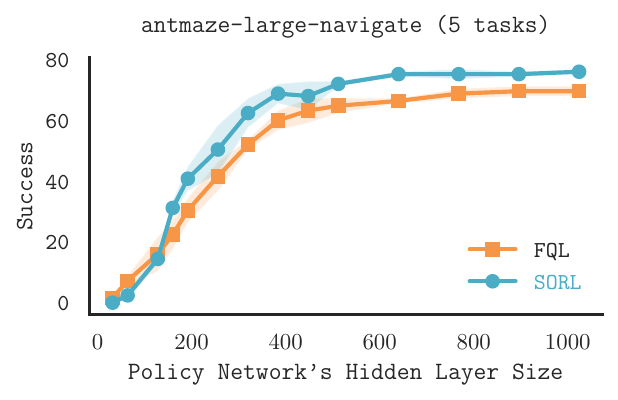}
  \caption{\textbf{Ablation Over Policy Network Size.} The performance is averaged over 5 seeds for each task, with 5 tasks per environment, and standard deviations reported. We use the same training-time and inference-time compute budgets for \algo as we use for Tables \ref{table:offline_table_envs}~and~\ref{table:offline_table_tasks} (i.e. $\dbtt=8$ and $\dinf=4$). We train and evaluate \fql with the parameters used in the official implementation \citep{park2025flow}. The only change to \algo and \fql is varying the sizes of the policy network's hidden layers. 
  }
  \label{fig:ablation_network_size}
\end{figure}

\textbf{Q: How does the performance of \algo vary with changing size of the policy network?}

\textit{Increasing the capacity of the policy network increases the performance of \algo and \fql similarly.}

We use the same training-time and inference-time compute budgets for \algo as we use for Tables \ref{table:offline_table_envs}~and~\ref{table:offline_table_tasks} (i.e. $\dbtt=8$ and $\dinf=4$). We train and evaluate \fql with the parameters used in the official implementation \citep{park2025flow}. The only change to \algo and \fql is varying the sizes of the policy network's hidden layers. We use an MLP with four hidden layers of the same size. Since \fql uses two actor networks of the same size in the official implementation \citep{park2025flow}---a flow-matching network for offline data regularization and a one-step distillation network for the actor's policy---we scale both networks, keeping them the same size for this ablation study.

As may be expected, increasing the capacity of the policy network increases the performance of both \algo and \fql, at similar rates, before plateauing at larger sizes. Notably, \algo achieves slightly better gains across network sizes, which may stem from the shortcut model class being more expressive than \fql's one-step distillation policy. 

\newpage
\section{Experimental and Implementation Details}
\label{appen:impl_det}
In this section, we describe the setup, implementation details, and baselines used in the paper. 

\subsection{Experimental Setup}
The experimental setup for our paper's main results (Table \ref{table:offline_table_envs} and \ref{table:offline_table_tasks}) follows OGBench's official evaluation scheme \citep{park2024ogbench, park2025flow}.

\paragraph{Environments.} 

We now describe the environments, tasks, and offline data used in our experiments. Our setup follows OGBench's official evaluation scheme \citep{park2024ogbench}, which was used by \citet{park2025flow}. We restate the description of environments and tasks below. 

We evaluate \algo on 8 robotics locomotion and manipulation robotics environments in the OGBench task suite (version \texttt{1.1.0}) \citep{park2024ogbench}, a benchmark suite designed for offline RL. Specifically, we use the following environments:
\begin{enumerate}
    \item \texttt{antmaze-large-navigate-singletask-v0}
    \item \texttt{antmaze-giant-navigate-singletask-v0}
    \item \texttt{humanoidmaze-large-navigate-singletask-v0}
    \item \texttt{humanoidmaze-giant-navigate-singletask-v0}
    \item \texttt{antsoccer-arena-navigate-singletask-v0}
    \item \texttt{cube-single-play-singletask-v0}
    \item \texttt{cube-double-play-singletask-v0}
    \item \texttt{scene-play-singletask-v0}
\end{enumerate}

The selected environments span a range of challenging control problems, covering both locomotion and manipulation. The \texttt{antmaze} and \texttt{humanoidmaze} tasks involve navigating quadrupedal (8 degrees of freedom (DOF)) and humanoid (21 DOF) agents through complex mazes. \texttt{antsoccer} focuses on goal-directed ball manipulation using a quadrupedal agent. The \texttt{cube} and \texttt{scene} environments center on object manipulation with a robot arm. Among these, \texttt{scene} tasks require sequencing multiple subtasks (up to 8 per episode), while \texttt{puzzle} emphasizes generalization to combinatorial object configurations. All environments are state-based. We follow the standard dataset protocols (\texttt{navigate} for locomotion, \texttt{play} for manipulation), which are built from suboptimal, goal-agnostic trajectories, and therefore pose a challenge for goal-directed policy learning. Following \citet{park2025flow}, we evaluate agents using binary task success rates (i.e., goal completion percentage), which is consistent with OGBench’s evaluation setup \citep{park2024ogbench}.

\paragraph{Tasks.} Following \citet{park2025flow}, we use OGBench's reward-based \texttt{singletask} variants for all experiments \citep{park2024ogbench}, which are best suited for reward-maximizing RL. As described in the official implementation \citep{park2024ogbench}, each OGBench environment offers five unique tasks, each associated with a specific evaluation goal, denoted by suffixes \texttt{singletask-task1} through \texttt{-task5}. These represent unique, fixed goals for the agent to accomplish. We utilize all five tasks for each environment. The datasets are annotated with a semi-sparse reward, where the reward is determined by the number of remaining subtasks at a given step for manipulation tasks, or whether the agent reaches a terminal goal state for locomotion tasks \citep{park2024ogbench,park2025flow}.

\paragraph{Evaluation.} We follow OGBench's official evaluation scheme \citep{park2024ogbench}, similar to \citet{park2025flow}. We train algorithms for 1,000,000 gradient steps and evaluate 50 episodes every 100,000 gradient steps. We report the average success rates of the final three evaluations (i.e. the evaluation results at 800,000, 900,000, and 1,000,000 gradient steps). In tables, we average over 8 seeds per task and report standard deviations, bolding values within 95\% of the best 
performance.

\newpage
\subsection{\algo Implementation Details}
One of the strengths of \algo is its implementation simplicity. In order to implement \algo, we adapt \citet{park2025flow}'s open-source implementations of various offline RL algorithms (FQL, IFQL, IQL, ReBRAC), which are adapted from \citet{park2024ogbench}'s open-source dataset and codebase. The major changes are implementing and training the shortcut model, and removing additional training complexity from \texttt{FQL} (e.g. the teacher/student networks). To ensure a fair comparison, we keep all major shared hyperparameters the same between the baselines and \algo (e.g. network size, number of gradient steps, and discount factor), unless otherwise noted.

\paragraph{Value Network.} Following \citet{park2025flow}, we train two $Q$ functions and use the mean of the two in the actor update (Equation \ref{eq:q-loss}). Like \citet{park2025flow}, we use the mean of the two $Q$ functions for the critic update (Equation \ref{eq:critic-loss}), except for \texttt{antmaze-large} and \texttt{antmaze-giant} tasks, where we follow \citet{park2025flow} in using the minimum of the two values. The only change to the \texttt{FQL} baseline's value network is adding an input that encodes the number of inference steps used to generate the actions.

\paragraph{Network Architecture and Optimizer.} Following \citet{park2025flow}, we use a multi-layer perceptron with 4 hidden layers of size 512 for both the value and policy networks. Like \citet{park2025flow}, we apply layer normalization \citep{ba2016layer} to value networks. Following \citet{park2025flow}, we use the Adam optimizer \citep{kingma2014adam}, which we add gradient clipping to.

\paragraph{Discretization Steps.} For \algo, we use $\dreg=8$ discretization steps during training (2 fewer than \texttt{FQL}), since \citet{frans2024one} suggests that discretization steps be powers of 2. By default for the results in Tables \ref{table:offline_table_envs} and \ref{table:offline_table_tasks}, we use $\dbtt=8$ steps of backpropagation through time and $\dinf=4$ inference steps, except for \texttt{humanoidmaze-medium} where we use $\dinf=2$. In other experiments, if the values of $\dbtt$ and $\dinf$ are changed, they are explicitly noted.

\paragraph{Shortcut Model.} Following the official implementation of shortcut models \citep{frans2024one}, for the self-consistency loss we sample $d \sim \{2^k\}_{k=0}^{\log_2 M - 1}$ and then sample $t$ uniformly on multiples of $d$ between 0 and 1 (i.e. points where the model may be queried). The model makes a prediction for step size $2d$, and its target is the concatenation of the two sequential steps of size $d$. For greater stability, we construct the targets via a target network. Unlike \citet{frans2024one}, we do not require special processing of the training batch into empirical and self-consistency targets, nor do we require special weight decay. We simply use the entire batch for the critic and actor updates, including the $Q$ loss, self-consistency loss, and flow-matching loss.

\paragraph{Hyperparameters.} We largely use the same hyperparameters for \algo as those used in the \texttt{FQL} baseline (Table \ref{table:shared_params}), except for parameters that are specific to or vary with \algo (Table \ref{table:params_algo}). We hyperparameter tune \algo with a similar training budget to the baselines: we only tune one of \algo's training parameters---\texttt{Q-loss (QL) coefficient}---on the \textit{default} task in each environment (Table \ref{table:params_ql}), over the values \texttt{\{10, 50, 100, 500\}}. We then use the same \texttt{QL coefficient} on all the tasks in an environment. Varying the strength of regularization to the offline data is common in offline RL \citep{tarasov2023corl,park2024value,park2025flow}. Following \citet{park2025flow}'s recommendation, we normalize the Q loss.

\clearpage 
\vspace*{\fill}
\begin{table}[H]
\caption{\label{table:shared_params} Shared Hyperparameters Between \texttt{FQL} Baseline and \algo.}
\centering
\begin{small}
\begin{sc}
\setlength{\tabcolsep}{2pt}
\resizebox{\textwidth}{!}{
\begin{tabular}{ll}
\toprule
Parameter & Value \\
\midrule
Optimizer & Adam \citep{kingma2014adam} \\
Gradient Steps & 1,000,000 \\
Minibatch Size & 256 \\
MLP Dimensions & [512, 512, 512, 512] \\
Nonlinearity & GELU \citep{hendrycks2016gaussian} \\
Target Network Smoothing Coefficient & 0.005 \\
Discount Factor $\gamma$ & 0.99 (\texttt{default}), 0.995 (\texttt{antmaze-giant, humanoidmaze, antsoccer}) \\
Discretization Steps & 8 \\
Time Sampling Distribution & Unif([0,1]) \\
Clipped Double Q-Learning & False (\texttt{default}), True (\texttt{adroit, antmaze-\{large, giant\}-navigate}) \\
\bottomrule
\end{tabular}
}
\end{sc}
\end{small}
\end{table}

\vspace{1cm}
\begin{table}[H]
\caption{\label{table:params_algo} Hyperparameters for \algo.}
\centering
\begin{small}
\begin{sc}
\setlength{\tabcolsep}{2pt}
\begin{tabular}{ll}
\toprule
Hyperparameter & Value \\
\midrule
Learning Rate & 1e-4 \\
Gradient Clipping Norm & 1 \\
Discretization Steps & 8 \\
BC Coefficient & $10$ \\
Self-Consistency Coefficient & $10$ \\
Q-Loss Coefficient & Table \ref{table:params_ql} \\
\bottomrule
\end{tabular}
\end{sc}
\end{small}
\end{table}

\vspace{1cm}
\begin{table}[H]
\caption{\label{table:params_ql} Q-Loss Coefficient for \algo.}
\centering
\begin{small}
\setlength{\tabcolsep}{2pt}
\begin{tabular}{lc}
\toprule
\textsc{Environment} & \textsc{Q Loss Coefficient} \\
\midrule
\texttt{antmaze-large-navigate-v0 (5 tasks)} & 500 \\
\texttt{antmaze-giant-navigate-v0 (5 tasks)} & 500 \\
\texttt{humanoidmaze-medium-navigate-v0 (5 tasks)} & 100 \\
\texttt{humanoidmaze-large-navigate-v0 (5 tasks)} & 500 \\
\texttt{antsoccer-arena-navigate-v0 (5 tasks)} & 500 \\
\texttt{cube-single-play-v0 (5 tasks)} & 10 \\
\texttt{cube-double-play-v0 (5 tasks)} & 50 \\
\texttt{scene-play-v0 (5 tasks)} & 100 \\
\bottomrule
\end{tabular}
\end{small}
\end{table}

\vspace*{\fill}
\clearpage

\subsection{Baselines} 
We evaluate against three Gaussian-based offline RL algorithms (\texttt{BC} \citep{pomerleau1988alvinn}, \texttt{IQL} \citep{kostrikov2021offline}, \texttt{ReBRAC} \citep{tarasov2023revisiting}), three diffusion-based algorithms (\texttt{IDQL} \citep{hansen2023idql}, \texttt{SRPO} \citep{chen2023score}, \texttt{CAC} \citep{ding2023consistency}), and four flow-based algorithms (\texttt{FAWAC} \citep{nair2020awac}, \texttt{FBRAC} \citep{zhang2025energy}, \texttt{IFQL} \citep{wang2022diffusion}, \texttt{FQL} \citep{park2025flow}). For the baselines we compare against in this paper, we report results from \citet{park2025flow}, who performed an extensive tuning and evaluation of the aforementioned baselines on OGBench tasks. We restate descriptions of the baselines here.

\paragraph{Gaussian-Based Policies.} To evaluate standard offline reinforcement learning (RL) methods that employ Gaussian policies, we consider three representative baselines: Behavior Cloning (\texttt{BC})\citep{pomerleau1988alvinn}, Implicit Q-Learning (\texttt{IQL})~\citep{kostrikov2021offline}, and \texttt{ReBRAC}~\citep{tarasov2023revisiting}. \texttt{BC} \citep{pomerleau1988alvinn} serves as a simple imitation learning baseline that directly mimics the demonstration data without any explicit value optimization, while \texttt{IQL} \citep{kostrikov2021offline} represents a popular value-based approach that addresses the overestimation problem in offline RL through implicit learning of the maximum value function. \texttt{ReBRAC} \citep{tarasov2023revisiting}, a more recent method that is known to perform well on many D4RL tasks~\citep{tarasov2023corl}, extends the \texttt{BRAC} framework with regularization techniques specifically designed to constrain the learned policy close to the behavior policy, thereby mitigating the distributional shift problem common in offline RL settings.

\paragraph{Diffusion-Based Policies.} To evaluate diffusion policy-based offline RL methods, we compare to \texttt{IDQL} \citep{hansen2023idql}, \texttt{SRPO} \citep{chen2023score} and Consistency-AC (\texttt{CAC}) \citep{ding2023consistency}. \texttt{IDQL} \citep{hansen2023idql} builds on \texttt{IQL} by using a generalized critic and rejection sampling from a diffusion-based behavior policy. \texttt{SRPO} \citep{chen2023score} replace the diffusion sampling with a deterministic policy trained via score-regularized policy gradient to speed up sampling. \texttt{CAC} \citep{ding2023consistency} introduces a consistency-based actor-critic framework to backpropagation through time with fewer steps.

\paragraph{Flow-Based Policies.} To evaluate flow policy-based offline RL methods, we compare flow-based variants of prior methods, including new variants introduced by \citet{park2025flow}: \texttt{FAWAC} \citep{nair2020awac}, \texttt{FBRAC} \citep{wang2022diffusion}, \texttt{IFQL} \citep{hansen2023idql}, and \texttt{FQL} \citep{park2025flow}, as only a few previous works explicitly employ flow-based policies. \texttt{FAWAC} extends \texttt{AWAC} by adopting a flow-based policy trained with the AWR objective and estimates $Q^\pi$ for the current policy using fully off-policy bootstrapping. \texttt{FBRAC} is the flow counterpart of Diffusion-QL, based on the naïve Q-loss with backpropagation through time. \texttt{IFQL} is a flow-based variant of \texttt{IDQL} that relies on rejection sampling. \texttt{FQL} distills a one-step policy from an expressive flow-matching policy to avoid costly iterative sampling. Unlike \texttt{FQL}, \algo uses a single-stage training procedure and does not require distilling a model into a one-step policy.

\end{document}